\renewcommand\footnotetextcopyrightpermission[1]{} 
\newtheorem{specialthm}{theorem}
\newtheorem{speciallma}{lemma}
\newcommand{\tabincell}[2]{\begin{tabular}{@{}#1@{}}#2\end{tabular}}
\begin{document}
\title{Online Compact Convexified Factorization Machine}

\author{Wenpeng Zhang$^\dag,\quad$ Xiao Lin$^\dag,\quad$ Peilin Zhao$^\ddag$}
\affiliation{%
 \institution{$^\dag$Tsinghua University,$\quad$ $^\ddag$South China University of Technology}
}
\email{zhangwenpeng0@gmail.com, jackielinxiao@gmail.com, peilinzhao@hotmail.com}

%
%



\begin{abstract}
Factorization Machine (FM) is a supervised learning approach with a powerful capability of feature engineering. It yields state-of-the-art performance in various batch learning tasks where all the training data is made available prior to the training. However, in real-world applications where the data arrives sequentially in a streaming manner, the high cost of re-training with batch learning algorithms has posed formidable challenges in the online learning scenario. The initial challenge is that no prior formulations of FM could fulfill the requirements in Online Convex Optimization (OCO) -- the paramount framework for online learning algorithm design. To address the aforementioned challenge, we invent a new convexification scheme leading to a Compact Convexified FM (CCFM) that seamlessly meets the requirements in OCO. However for learning Compact Convexified FM (CCFM) in the online learning setting, most existing algorithms suffer from expensive projection operations. To address this subsequent challenge, we follow the general projection-free algorithmic framework of Online Conditional Gradient and propose an Online Compact Convex Factorization Machine (OCCFM) algorithm that eschews the projection operation with efficient linear optimization steps. In support of the proposed OCCFM in terms of its theoretical foundation, we prove that the developed algorithm achieves a sub-linear regret bound. To evaluate the empirical performance of OCCFM, we conduct extensive experiments on 6 real-world datasets for online recommendation and binary classification tasks. The experimental results show that OCCFM outperforms the state-of-art online learning algorithms.
\end{abstract}

%
%
%



\maketitle

\section{Introduction}
\label{Introduction}
Factorization Machine (FM)~\cite{rendle2010factorization} is a generic approach for supervised learning . It provides an efficient mechanism for feature engineering, capturing first-order information of each input feature as well as second-order pairwise feature interactions with a low-rank matrix in a factorized form. FM achieves state-of-the-art performances in various applications, including recommendation~\cite{rendle2011fast,Nguyen:2014:GPF:2600428.2609623}, computational advertising~\cite{juan2017field}, search ranking~\cite{Lu:2017:MFM:3018661.3018716} and toxicogenomics prediction~\cite{Yamada:2017:CFM:3097983.3098103}, and so on. As such, FM has recently regained significant attention from researchers~\cite{Xu2016Synergies,blondel2016polynomial,blondel2016higher,ijcai2017-435} due to its increasing popularity in industrial applications and data science competitions~\cite{juan2017field,zhong2016scaling}.

Despite of the overwhelming research on Factorization Machine, majority of the existing studies are conducted in the batch learning setting where all the training data is available before training.
However, in many real-world scenarios, like online recommendation and online advertising~\cite{Wang:2013:OMC:2507157.2507176, ad41159}, the training data arrives sequentially in a streaming fashion.
If the batch learning algorithms are applied in accordance with the streams in such scenarios, the models have to be re-trained each time new data arrives. Since these data streams usually arrive in large-scales and are changing constantly in a real-time manner, the incurred high re-training cost makes batch learning algorithms impractical in such settings. This creates an impending need for an efficient online learning algorithm for Factorization Machine. Moreover, it is expected that the online learning algorithm has a theoretical guarantee for its performance.

In the current paper, we aim to develop an ideal algorithm based on the paramount framework -- Online Convex Optimization (OCO) ~\cite{Shalev-Shwartz:2012:OLO:2185819.2185820,hazan2016introduction} in the online learning setting. Unfortunately, extant formulations are unable to fulfill the two fundamental requirements demanded in OCO: ({\romannumeral1}) any instance of all the parameters should be represented as a single point from a convex compact decision set; and, ({\romannumeral2}) the loss incurred by the prediction should be formulated as a convex function over the decision set. Indeed, in most existing formulations for FM  \cite{rendle2010factorization, juan2017field, Lu:2017:MFM:3018661.3018716, cheng2014gradient, blondel2016higher}, the loss functions are non-convex with respect to the factorized feature interaction matrix, thus violating requirement ({\romannumeral2}). Further, although some studies have proposed formulations for convex FM which rectify the non-convexity problem \cite{blondel2015convex, Yamada:2017:CFM:3097983.3098103}, they still treat the feature weight vector and the feature interaction matrix as separated parameters, thus violating requirement ({\romannumeral1}) in OCO.

To address these problems, we propose a new convexification scheme for FM. Specifically, we rewrite the global bias, the feature weight vector and the feature interaction matrix into a compact augmented symmetric matrix and restrict the augmented matrix with a nuclear norm bound, which is a convex surrogate of the low-rank constraint~\cite{boyd2004convex}. Therefore the augmented matrices form a convex compact decision set which is essentially a symmetric bounded nuclear norm ball. Then we rewrite the prediction of FM into a convex linear function with respect to the augmented matrix, thus the loss incurred by the prediction is convex. Based on the convexification scheme, the resulting formulation of \textit{Compact Convexified FM}\,(CCFM) can seamlessly meet the aforementioned requirements of the OCO framework.


Yet, when we investigate various online learning algorithms for Compact Convexified Factorization Machine within the OCO framework, we find that most of existing online learning algorithms involve a projection step in every iteration. When the decision set is a bounded nuclear norm ball, the projection amounts to the computationally expensive Singular Value Decomposition (SVD), and consequently limits the applicability of most online learning algorithms. Notably, one exceptional algorithm is Online Conditional Gradient (OCG), which eschews the projection operation by a linear optimization step. When OCG is applied to the nuclear norm ball, the linear optimization amounts to the computation of maximal singular vectors of a matrix, which is much simpler. However, it remains theoretically unknown whether an algorithm similar to OCG still exists for the specific subset of a bounded nuclear norm ball.


In response, we propose an online learning algorithm for CCFM, which is named as \textit{Online Compact Convexified Factorization Machine}\,(OCCFM). We prove that when the decision set is a symmetric nuclear norm ball, the linear optimization needed for an OCG-alike algorithm still exsits, i.e. it amounts to the computation of the maximal singular vectors of a specific symmetric matrix. Based on this finding, we propose an OCG-alike online learning algorithm for OCCFM. As OCCFM is a variant of OCG, we prove that the theoretical analysis of OCG still fits for OCCFM, which achieves a sub-linear regret bound in order of $O(T^{3/4})$. Further, we conduct extensive experiments on real-world datasets to evaluate the empirical performance of OCCFM. As shown in the experimental results in both online recommendation and online binary classification tasks, OCCFM outperforms the state-of-art online learning algorithms in terms of both efficiency and prediction accuracy.


The contributions of this work are summarized as follows:
\begin{enumerate}
\item To the best of our knowledge, we are the first to propose the Online Compact Convexified Factorization Machine with theoretical guarantees, which is a new variant of FM in the online learning setting.
\item The proposed formulation of CCFM is superior to prior research in that it seamlessly fits into the online learning framework. Moreover, this formulation can be used in not only online setting, but also batch and stochastic settings.
\item We propose a routine for the linear optimization on the decision set of CCFM based on the Online Conditional Gradient algorithm, leading to an OCG-alike online learning algorithm for CCFM. Moreover, this finding also applies to any other online learning task whose decision set is the symmetric bounded nuclear norm ball.
\item We evaluate the performance of the proposed OCCFM on both online recommendation and online binary classification tasks, showing that OCCFM outperforms state-of-art online learning algorithms.
\end{enumerate}

We believe our work sheds light on the Factorization Machine research, especially for online Factorization Machine. Due to the wide application of FM, our work is of both theoretical and practical significance.


\section{Preliminaries}
\label{Preliminary}

In this section, we first review more details of factorization machine, and then provide an introduction to the online convex optimization framework and the online conditional gradient algorithm, both of which we utilize to design our online learning algorithm for factorization machine.

\subsection{Factorization Machine}

Factorization Machine\,(FM)~\cite{rendle2010factorization} is a general supervised learning approach working with any real-valued feature vector. The most important characteristic of FM is the capability of powerful feature engineering. In addition to the usual first-order information of each input feature, it captures the second-order pairwise feature interaction in modeling, which leads to stronger expressiveness than linear models.

Given an input feature vector $\bm{x}\in \mathbb{R}^{d}$, vanilla FM makes the prediction $\hat{y}\in \mathbb{R}$ with the following formula:
\begin{small}
\begin{equation*}
\hat{y}(\bm{x},\omega_0,\bm{\omega},\bm{V})=\omega_0+\bm{\omega}^{T}\bm{x}+\sum_{i=1}^d\sum_{j=i+1}^d(\bm{VV}^T)_{ij}x_ix_j,
\vspace{-10pt}
\end{equation*}
\end{small}\\
where $\omega_0\in \mathbb{R}$ is the bias term, $\bm{\omega}\in \mathbb{R}^{d}$ is the first-order feature weight vector and $\bm{V}\in\mathbb{R}^{d\times k}$ is the second-order factorized feature interaction matrix; $(\bm{VV}^T)_{ij}$ is the entry in the $i$-th row and $j$-th column of matrix $\bm{VV}^T$, and $k\ll d$ is the hyper-parameter determining the rank of $\bm{V}$. As shown in previous studies~\cite{blondel2015convex,Yamada:2017:CFM:3097983.3098103}, $\hat{y}$ is non-convex with respect to $\bm{V}$.

The non-convexity in vanilla FM will result in some problems in practice, such as local minima and instability of convergence. In order to overcome these problems, two formulations for convex FM~\cite{blondel2015convex,Yamada:2017:CFM:3097983.3098103} have been proposed, where the feature interaction is directly modeled as $\bm{Z}\in\mathbb{R}^{d\times d}$ rather than $\bm{VV}^T$ in the factorized form which induces non-convexity. The matrix $\bm{Z}$ is then imposed upon a bounded nuclear norm constraint to maintain the low-rank property. In general, convex FM allows for more general modeling of feature interaction and is quite effective in practice. The difference between the two formulations is whether the diagonal entries of $\bm{Z}$ are utilized in prediction. For clarity, we refer to the formulation in~\cite{blondel2015convex} as Convex FM\,$(1)$ and that in~\cite{Yamada:2017:CFM:3097983.3098103} as Convex FM\,$(2)$.

Note that we propose a new convexification scheme for FM in this work, which is inherently different from the above two formulations for convex FM. The details of our convexification scheme and its comparison with convex FMs will be presented in Section~\ref{Convexfication_Scheme}.

\subsection{Online Convex Optimization}

Online Convex Optimization\,(OCO)~\cite{Shalev-Shwartz:2012:OLO:2185819.2185820,hazan2016introduction} is the paramount framework for designing online learning algorithms. It can be seen as a structured repeated game between a learner and an adversary. At each round $t\in\{1,2,\cdots,T\}$, the learner is required to generate a decision point $\bm{x}_t$ from a convex compact set $\mathcal{Q}\subseteq \mathbb{R}^{n}$. Then the adversary replies the learner's decision with a convex loss function $f_{t}:~\mathcal{Q}\rightarrow\mathbb{R}$ and the learner suffers the loss $f_t(\bm{x}_t)$. The goal of the learner is to generate a sequence of decisions $\{\bm{x}_t|~t=1,2,\cdots,T\}$ so that the regret with respect to the best fixed decision in hindsight
\begin{small}
\begin{equation*}
\label{Eqn::Regret}
\mathbf{regret}_T=\sum_{t=1}^Tf_t(\bm{x}_t)-\min_{\bm{x}^{*}\in\mathcal{Q}}\sum_{t=1}^Tf_t(\bm{x}^{*})\\
\end{equation*}
\end{small}
is sub-linear in $T$, i.e. $\lim\limits_{T\rightarrow\infty}\frac{1}{T}\mathbf{regret}_T=0$. The sub-linearity implies that when $T$ is large enough, the learner can perform as well as the best fixed decision in hindsight.

Based on the OCO framework, many online learning algorithms have been proposed and successfully applied in various applications
~\cite{li2014online,DeMarzo:2006:OTA:1132516.1132586,Zhao:2013:COA:2487575.2487647}. The two most popular representatives of them are Online Gradient Descent\,(OGD)~\cite{Zinkevich:2003:OCP:3041838.3041955} and Follow-The-Regularized-Leader\,(FTRL)~\cite{ad41159}.

\subsection{Online Conditional Gradient}

Online Conditional Gradient\,(OCG)~\cite{hazan2016introduction} is a projection-free online learning algorithm that eschews the possible computationally expensive projection operation needed in its counterparts, including OGD and FTRL. It enjoys great computational advantage over other online learning algorithms when the decision set is a bounded nuclear norm ball. As will be shown in Section~\ref{Model_and_Algorithm}, we utilize a similar decision set in our proposed convexification scheme, thus we use OCG as the cornerstone in our algorithm design. In the following, we introduce more details about it.

In practice, to ensure that the newly generated decision points lie inside the decision set of interest, most online learning algorithms invoke a projection operation in every iteration. For example, in OGD, when the gradient descent step generates an infeasible iterate that lies out of the decision set, we have to project it back to regain feasibility. In general, this kind of projection amounts to solving a quadratic convex program over the decision set and will not cause much problem. However, when the decision sets are of specific types, such as the bounded nuclear norm ball, the set of all semi-definite matrices and polytopes~\cite{hazan2016introduction, Hazan:2012:POL:3042573.3042808}, it turns to amount to very expensive algebraic operations. To avoid these expensive operations, projection-free online learning algorithm OCG has been proposed. It is much more efficient since it eschews the projection operation by using a linear optimization step instead in every iteration. For example, when the decision set is a bounded nuclear norm ball, the projection operation needed in other online learning algorithms amounts to computing a full singular value decomposition\,(SVD) of a matrix, while the linear optimization step in OCG amounts to only computing the maximal singular vectors, which is at least one order of magnitude simpler. Recently, a decentralized distributed variant of OCG algorithm has been proposed in \cite{pmlr-v70-zhang17g}, which allows for high efficiency in handling large-scale machine learning problems.

\begin{small}
\begin{algorithm}[!t]
\renewcommand{\algorithmicrequire}{\textbf{Input:}}
\renewcommand{\algorithmicensure}{\textbf{Output:}}
\caption{\textsc{Online Conditional Gradient (\textbf{OCG})}}
\label{OCG}
\begin{flushleft}
\textbf{Input:}
Convex set $\mathcal{Q}$, Maximum round number $T$, parameters $\eta$ and $\{\gamma_t\}^1$\protect\footnotemark[5]

\end{flushleft}

\begin{algorithmic}[1]
\STATE Initialize $\bm{x}_{1}\in\mathcal{Q}$
\FOR {$t=1,2,\ldots,T$}
\STATE Play $\bm{x}_t$ and observe $f_t$
\STATE Let $F_t(\bm{x})=\eta\sum_{\tau=1}^{t-1}\langle\nabla f_\tau(\bm{C}_{\tau}),\bm{x}\rangle+\|\bm{x}-\bm{x}_1\|_2^2$
\STATE Compute $\bm{v}_t=\mathrm{argmin}_{\bm{x}\in \mathcal{Q}}\langle\bm{x},\nabla F_t(\bm{x}_{t})\rangle$
\STATE Set $\bm{x}_{t+1}=(1-\gamma_t)\bm{x}_t+\gamma_t\bm{v}_t$
\ENDFOR
\end{algorithmic}
\end{algorithm}
\end{small}
\footnotetext{$ ^1$Usually $\gamma_t$is set to $\frac{1}{t^{1/2}}$}

\section{Online Compact Convexified Factorization Machine}
\label{Model_and_Algorithm}

In this section, we turn to the development of our Online Compact Convexified Factorization Machine.
As introduced before, OCO is the paramount framework for designing online learning algorithms. There are two fundamental requirements in it:
({\romannumeral1}) any instance of all the model parameters should be represented as a single point from a convex compact decision set;
({\romannumeral2}) the loss incurred by the prediction should be formulated as a convex function over the decision set.

Vanilla FM can not directly fit into the OCO framework due to the following two reasons. First, vanilla FM contains both the unconstrained feature weight vector $\bm{\omega}$\,($\omega_0$ can be written into $\bm{\omega}$) and the factorized feature interaction matrix $\bm{V}$; they are separated components of parameters which can not be formulated as a single point from a decision set. Second, as the prediction $\hat{y}$ in vanilla FM is non-convex with respect to $\bm{V}$~\cite{blondel2015convex,yamada2015convex}, when we plug it into the loss function $f_t$ at each round $t$, the resulting loss incurred by the prediction is also non-convex with respect to $\bm{V}$. Although some previous studies have proposed two formulations for convex FM, they cannot fit into the OCO framework either. Similar to vanilla FM, the two convex formulations treat the unconstrained feature weight vector $\bm{\omega}$ and the feature interaction matrix $\bm{Z}$ as separated components of parameters, which violates requirement ({\romannumeral1}).

In order to meet the requirements of the OCO framework, we first invent a new convexification scheme for FM which results in a formulation named as Compact Convexified FM\,(CCFM). Then based on the new formulation, we design an online learning algorithm -- Online Compact Convexified Factorization Machine\,(OCCFM), which is a new variant of the OCG algorithm tailored to our setting.

\subsection{Compact Convexified Factorization Machine}
\label{Convexfication_Scheme}

The main idea of our proposed convexification scheme is to put all the parameters, including the bias term, the first-order feature weight vector and the second-order feature interaction matrix, into a single augmented matrix which is then enforced to be low-rank. As the low-rank property is not a convex constraint, we approximate it with a bounded nuclear norm constraint, which is a common practice in the machine learning community~\cite{blondel2015convex, Shalev-Shwartz:2012:OLO:2185819.2185820, Hazan:2012:POL:3042573.3042808}. The details of the scheme are given in the following.

Recall that in vanilla FM, $\omega_0$, $\bm{\omega}$ and $\bm{V}$ refer to the bias term, the linear feature weight vector and the factorized feature interaction matrix respectively. Due to the non-convexity of the prediction $\hat{y}$ with respect to $\bm{V}$, we adopt a symmetric matrix $\bm{Z}\in\mathbb{R}^{d\times d}$ to model the pairwise feature interaction, which is a popular practice in designing convex variants of FM\,\cite{blondel2015convex, Yamada:2017:CFM:3097983.3098103}.
Then we rewrite the bias term $\omega_0$, the first-order feature weight vector $\bm{\omega}$ and the second-order feature interaction matrix $\bm{Z}$ into a single compact augmented matrix $\bm{C}$:
\begin{small}
\begin{equation*}
\bm{C}=\left[
\begin{array}{ccc}
\bm{Z} & \bm{\omega} \\
\bm{\omega}^T & 2\omega_0
\end{array}
\right],
\end{equation*}
\end{small}
\noindent where $\bm{Z}=\bm{Z}^T\in\mathbb{R}^{d\times d},~\bm{\omega}\in\mathbb{R}^{d},~\omega_0\in\mathbb{R}$.

In order to achieve a model with low complexity, we restrict the augmented matrix $\bm{C}$ to be low-rank.
However, the constraint over the rank of a matrix is non-convex and does not fit into the OCO framework. A typical approximation of the rank of a matrix $\bm{C}$ is its nuclear norm $\|\bm{C}\|_{tr}$ \cite{boyd2004convex}:
$\|\bm{C}\|_{tr}=tr(\sqrt{\bm{C^TC}})=\sum_{i=1}^d\sigma_{i}$,
where $\sigma_i$ is the $i$-th singular value of $\bm{C}$. As the singular values are non-negative, the nuclear norm is essentially a convex surrogate of the rank of the matrix \cite{blondel2015convex, Shalev-Shwartz:2012:OLO:2185819.2185820, Hazan:2012:POL:3042573.3042808}. Therefore it is standard to consider a relaxation that replaces the rank constraint by the bounded nuclear norm $\|\bm{C}\|_{tr}\leq\delta.$

Denote $\mathcal{S}$ as the set of symmetric matrices: $\mathcal{S}^{d\times d}=\{\bm{X}|~\bm{X}\in \mathbb{R}^{d\times d},~\bm{X}=\bm{X}^T\}$, the resulting decision set of the augmented matrices $\bm{C}$ can be written as the following formulation:
\begin{small}
\begin{equation*}
\mathcal{K}=\{\bm{C}|\bm{C}=\left[
\begin{array}{ccc}
\bm{Z} & \bm{\omega} \\
\bm{\omega}^T & 2\omega_0
\end{array}
\right],~\|\bm{C}\|_{tr}\leq\delta,
\bm{Z}\in\mathcal{S}^{d\times d},~\bm{\omega}\in\mathbb{R}^{d},~\omega_0\in\mathbb{R}\}.
\end{equation*}
\end{small}As the set $\mathcal{K}$ is bounded and closed, it is also compact.
Next we prove that the decision set $\mathcal{K}$ is convex in Lemma \ref{Nuclear_Norm_Ball}.

\begin{speciallma}
\label{Nuclear_Norm_Ball}
The set
\begin{small}
$\mathcal{K}=\{\bm{C}|\bm{C}=\left[
\begin{array}{ccc}
\bm{Z} & \bm{\omega} \\
\bm{\omega}^T & 2\omega_0
\end{array}
\right],~\|\bm{C}\|_{tr}\leq\delta,~\bm{Z}\in\mathcal{S}^{d\times d},~\bm{\omega}\in\mathbb{R}^{d},~\omega_0\in\mathbb{R}\}$
\end{small}
is convex.
\end{speciallma}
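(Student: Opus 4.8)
The plan is to write $\mathcal{K}$ as the intersection of two convex sets and then invoke the elementary fact that an intersection of convex sets is convex. Concretely, let $\mathcal{L}$ be the set of all $(d{+}1)\times(d{+}1)$ real matrices of the block form appearing in the definition of $\mathcal{K}$, i.e. those of the shape $\left[\begin{array}{cc}\bm{Z}&\bm{\omega}\\\bm{\omega}^T&2\omega_0\end{array}\right]$ with $\bm{Z}=\bm{Z}^T\in\mathbb{R}^{d\times d}$, $\bm{\omega}\in\mathbb{R}^{d}$, $\omega_0\in\mathbb{R}$; and let $\mathcal{B}=\{\bm{C}\in\mathbb{R}^{(d+1)\times(d+1)}\mid\|\bm{C}\|_{tr}\le\delta\}$ be the nuclear-norm ball of radius $\delta$. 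Then $\mathcal{K}=\mathcal{L}\cap\mathcal{B}$, so it suffices to show that $\mathcal{L}$ and $\mathcal{B}$ are each convex.

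First I would argue that $\mathcal{L}$ is in fact a linear subspace of $\mathbb{R}^{(d+1)\times(d+1)}$ (indeed it is exactly the space $\mathcal{S}^{(d+1)\times(d+1)}$ of symmetric matrices, since the entry $2\omega_0$ ranges over all of $\mathbb{R}$), hence trivially convex: given $\bm{C}_1,\bm{C}_2\in\mathcal{L}$ with blocks $(\bm{Z}_i,\bm{\omega}_i,\omega_{0,i})$ and $\lambda\in[0,1]$, the matrix $\lambda\bm{C}_1+(1-\lambda)\bm{C}_2$ again has the required block shape, with upper-left block $\lambda\bm{Z}_1+(1-\lambda)\bm{Z}_2$ (which is still symmetric), off-diagonal block $\lambda\bm{\omega}_1+(1-\lambda)\bm{\omega}_2\in\mathbb{R}^d$, and bottom-right entry $2\bigl(\lambda\omega_{0,1}+(1-\lambda)\omega_{0,2}\bigr)$, so it lies in $\mathcal{L}$.

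Second, $\mathcal{B}$ is convex because $\|\cdot\|_{tr}$ is a genuine norm on $\mathbb{R}^{(d+1)\times(d+1)}$: it equals the sum of the singular values, it is absolutely homogeneous, and it satisfies the triangle inequality $\|\bm{A}+\bm{B}\|_{tr}\le\|\bm{A}\|_{tr}+\|\bm{B}\|_{tr}$ (being a unitarily invariant norm, equivalently the dual of the spectral norm). Hence for $\bm{C}_1,\bm{C}_2\in\mathcal{B}$ and $\lambda\in[0,1]$ we get $\|\lambda\bm{C}_1+(1-\lambda)\bm{C}_2\|_{tr}\le\lambda\|\bm{C}_1\|_{tr}+(1-\lambda)\|\bm{C}_2\|_{tr}\le\lambda\delta+(1-\lambda)\delta=\delta$, so the convex combination stays in $\mathcal{B}$. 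Combining the two steps, $\mathcal{K}=\mathcal{L}\cap\mathcal{B}$ is convex. I do not expect any substantive obstacle here; the only ingredient that is not a one-line verification is the triangle inequality for the nuclear norm, which is a standard fact and can simply be cited. If one prefers to avoid the intersection formalism altogether, the same argument runs as a single direct check: take $\bm{C}_1,\bm{C}_2\in\mathcal{K}$ and $\lambda\in[0,1]$, verify the block structure of $\lambda\bm{C}_1+(1-\lambda)\bm{C}_2$ exactly as above, and then bound its nuclear norm by $\delta$ using homogeneity and the triangle inequality.
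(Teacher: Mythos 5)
Your proposal is correct and follows essentially the same route as the paper: decompose $\mathcal{K}$ as the intersection of the block-structured (symmetric) set with the nuclear-norm ball, verify each is convex, and invoke closure of convexity under intersection. The only cosmetic differences are that you spell out the norm-ball convexity via the triangle inequality (the paper cites it as standard) and you correctly allow distinct bias entries $\omega_{0,1},\omega_{0,2}$ in the two matrices, which the paper's own write-up glosses over.
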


\begin{proof}
The proof is based on an important property of convex sets: if two sets $S_1$ and $S_2$ are convex, then their intersection $S=S_1\cap S_2$ is also convex \cite{boyd2004convex}. In our case, the decision set $\mathcal{K}$ is an intersection of $\tilde{\mathcal{K}}$ and $\mathcal{B}$, where
\begin{small}
$\tilde{\mathcal{K}}=\{\bm{C}|\bm{C}=\left[
\begin{array}{ccc}
\bm{Z} & \bm{\omega} \\
\bm{\omega}^T & 2\omega_0
\end{array}
\right],~\bm{Z}\in\mathcal{S}^{d\times d},~\bm{\omega}\in\mathbb{R}^{d},~\omega_0\in\mathbb{R}\}$
\end{small}
and
$\mathcal{B}=\{\bm{C}|\|\bm{C}\|_{tr}\leq\delta,~\bm{C}\in \mathbb{R}^{(d+1)\times(d+1)}\}$. To prove $\mathcal{K}$ is convex, we prove that both $\tilde{\mathcal{K}}$ and $\mathcal{B}$ are convex sets.

First, we prove that $\tilde{\mathcal{K}}$ is a convex set. For any two points $\bm{C}_1,\,\bm{C}_2\in\tilde{\mathcal{K}}$ and any $\alpha\in[0,1]$, we have
\begin{small}
\begin{equation*}
\begin{aligned}
\alpha\bm{C}_1+(1-\alpha)\bm{C}_2
=~&\alpha\left[
\begin{array}{ccc}
\bm{Z}_1 & \bm{\omega}_1 \\
\bm{\omega}_1^T & 2\omega_0
\end{array}
\right]
+(1-\alpha)\left[
\begin{array}{ccc}
\bm{Z}_2 & \bm{\omega}_2 \\
\bm{\omega}_2^T & 2\omega_0
\end{array}
\right]\\
=~&\left[
\begin{array}{ccc}
\bm{Z}_{\alpha} & \bm{\omega}_{\alpha}  \\
\bm{\omega}_{\alpha}^T & 2\omega_0
\end{array}
\right],
\end{aligned}
\end{equation*}
\end{small}
where $\bm{Z}_{\alpha}=\alpha\bm{Z}_1+(1-\alpha)\bm{Z}_2$ and $\bm{\omega}_{\alpha}=\alpha\bm{\omega}_1+(1-\alpha)\bm{\omega}_2$.
By definition of $\tilde{\mathcal{K}}$, $\bm{Z}_1=\bm{Z}_1^T$, $\bm{Z}_2=\bm{Z}_2^T$, thus $\alpha\bm{Z}_1+(1-\alpha)\bm{Z}_2=\alpha\bm{Z}_1^T+(1-\alpha)\bm{Z}_2^T$. Following the property of transpose, $\alpha\bm{\omega}_1^T+(1-\alpha)\bm{\omega}_2^T = (\alpha\bm{\omega}_1+(1-\alpha)\bm{\omega}_2)^T$. Therefore we obtain $\alpha\bm{C}_1+(1-\alpha)\bm{C}_2\in\tilde{\mathcal{K}}$. By definition, $\tilde{\mathcal{K}}$ is convex.

Second, it remains to prove that $\mathcal{B}$ is also a convex set. The bounded nuclear norm ball is a typical convex set of matrices \cite{boyd2004convex}, which is widely adopted in the machine learning community \cite{Shalev-Shwartz:2012:OLO:2185819.2185820, hazan2016introduction}. The detailed proof can be found in \cite{boyd2004convex}.

Following the convexity preserving property under the intersection of convex sets, we have $\mathcal{K}=\tilde{\mathcal{K}}\cap\mathcal{B}$ is also convex.
\end{proof}

As the decision set $\mathcal{K}$ consists of augmented matrices with bounded nuclear norm, by the properties of block matrix, we show that the decision set $\mathcal{K}$ is equivalent to a symmetric nuclear norm ball in Lemma \ref{Lemma::Symmetric_NNB}.
\begin{speciallma}
\label{Lemma::Symmetric_NNB}
The sets
$\mathcal{K}=\{\bm{C}|\bm{C}=\left[
\begin{array}{ccc}
\bm{Z} & \bm{\omega} \\
\bm{\omega}^T & 2\omega_0
\end{array}
\right],~\|\bm{C}\|_{tr}\leq\delta,~\bm{Z}\in\mathcal{S}^{d\times d},~\bm{\omega}\in\mathbb{R}^{d}\}$ and $\mathcal{K}^{'}=\{\bm{C}|\bm{C}\in\mathcal{S}^{(d+1)\times(d+1)},\|\bm{C}\|_{tr}\leq\delta\}$ are equivalent: $\mathcal{K}=\mathcal{K}^{'}$.
\end{speciallma}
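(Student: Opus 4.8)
The plan is to prove the identity $\mathcal{K}=\mathcal{K}'$ by verifying the two inclusions $\mathcal{K}\subseteq\mathcal{K}'$ and $\mathcal{K}'\subseteq\mathcal{K}$. The entire argument rests on one elementary \emph{property of block matrices}: if a $(d+1)\times(d+1)$ matrix is partitioned by splitting off its last row and column, then the matrix is symmetric if and only if its leading $d\times d$ block is symmetric and its two off-diagonal blocks are transposes of one another. In other words, the block pattern appearing in the description of $\mathcal{K}$ encodes exactly the symmetry constraint, and nothing more.

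First I would handle $\mathcal{K}\subseteq\mathcal{K}'$, which is immediate: any $\bm{C}\in\mathcal{K}$ is a $(d+1)\times(d+1)$ matrix of the form $\left[\begin{smallmatrix}\bm{Z}&\bm{\omega}\\\bm{\omega}^T&2\omega_0\end{smallmatrix}\right]$ with $\bm{Z}=\bm{Z}^T$, so reading off the four blocks shows $\bm{C}^T=\bm{C}$; since $\bm{C}$ already satisfies $\|\bm{C}\|_{tr}\le\delta$, we get $\bm{C}\in\mathcal{K}'$. For the reverse inclusion $\mathcal{K}'\subseteq\mathcal{K}$ --- the direction that actually requires an argument --- I would take an arbitrary symmetric $\bm{C}\in\mathbb{R}^{(d+1)\times(d+1)}$ with $\|\bm{C}\|_{tr}\le\delta$ and construct the decomposition explicitly: let $\bm{Z}$ be the leading principal $d\times d$ submatrix, let $\bm{\omega}=(C_{1,d+1},\dots,C_{d,d+1})^T\in\mathbb{R}^d$, and set $\omega_0=\tfrac12 C_{d+1,d+1}\in\mathbb{R}$. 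Symmetry of $\bm{C}$ gives $Z_{ij}=C_{ij}=C_{ji}=Z_{ji}$, so $\bm{Z}\in\mathcal{S}^{d\times d}$, and $C_{d+1,j}=C_{j,d+1}=\omega_j$, so the last row of $\bm{C}$ restricted to its first $d$ coordinates is exactly $\bm{\omega}^T$. Hence $\bm{C}=\left[\begin{smallmatrix}\bm{Z}&\bm{\omega}\\\bm{\omega}^T&2\omega_0\end{smallmatrix}\right]$ in the required form, it still obeys the nuclear-norm bound, and therefore $\bm{C}\in\mathcal{K}$; the factor $2$ on $\omega_0$ is harmless, since $\omega_0$ is unconstrained and thus $2\omega_0$ realizes every possible value of the entry $C_{d+1,d+1}$. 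Combining the two inclusions yields $\mathcal{K}=\mathcal{K}'$.

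I do not expect any genuine obstacle here. The single point requiring a moment's attention is the equivalence noted above --- that demanding a symmetric leading block together with mutually transposed off-diagonal blocks in the partition is the same as demanding symmetry of the whole matrix --- together with the observation that the nuclear-norm constraint is literally the same functional on both sides, so it transfers verbatim and needs no separate treatment.
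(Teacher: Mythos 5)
Your proposal is correct and follows exactly the route the paper indicates (and omits): writing an arbitrary symmetric $(d+1)\times(d+1)$ matrix in block form, with the two inclusions and the observation that the factor $2$ on $\omega_0$ is immaterial since $\omega_0$ ranges over all of $\mathbb{R}$. You have simply supplied the details the paper leaves out.
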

The proof is straight-forward by writing an arbitrary symmetric matrix into a block form, the details are omitted here.
%

By choosing a point $\bm{C}$ from the compact decision set $\mathcal{K}$, the prediction $\hat{y}(\bm{C})$ of an instance $\bm{x}$ is formulated as:
\begin{small}
\begin{equation*}
\hat{y}(\bm{C})=\frac{1}{2}\bm{\hat{x}}^T\bm{C\hat{x}},
\end{equation*}
\end{small}
where
$\bm{C}\in\mathcal{K},~\bm{\hat{x}}=
\left[
\begin{array}{ccc}
\bm{x} \\
1
\end{array}
\right].
$
Although the prediction function $\hat{y}$ has a similar form with the feature interaction component $\frac{1}{2}\bm{x}^T\bm{Z}\bm{x}$ of vanilla FM, they are intrinsically different.
Plugging the formulation of $\bm{C}=\left[
\begin{array}{ccc}
\bm{Z} & \bm{\omega} \\
\bm{\omega}^T & 2\omega_0
\end{array}
\right]$ and $\bm{\hat{x}}=
\left[
\begin{array}{ccc}
\bm{x} \\
1
\end{array}
\right]$ into the prediction function, we have
\begin{small}
\begin{equation*}
\hat{y}(\bm{C})=\frac{1}{2}\bm{\hat{x}}^T\bm{C\hat{x}}=\omega_0+\bm{\omega}^T\bm{x}+\frac{1}{2}\bm{x}^T\bm{Z}\bm{x}.
\end{equation*}
\end{small}
Therefore the prediction function $\hat{y}(\bm{C})$ contains all the three components in vanilla FM, including the global bias, the first-order feature weight and the second-order feature interactions.
Most importantly, $\hat{y}(\bm{C})$ is a convex function in $\bm{C}$, as shown in Lemma. \ref{Lemma::Convex_Function}.

\begin{speciallma}
\label{Lemma::Convex_Function}
The prediction function $\hat{y}(\bm{C})$ is a convex function of $\bm{C}\in\mathcal{K}$, where $\mathcal{K}=\{\bm{C}|\bm{C}=\left[
\begin{array}{ccc}
\bm{Z} & \bm{\omega} \\
\bm{\omega}^T & 2\omega_0
\end{array}
\right],~\|\bm{C}\|_{tr}\leq\delta,~\bm{Z}\in\mathcal{S}^{d\times d},~\bm{\omega}\in\mathbb{R}^{d},~\omega_0\in\mathbb{R}\}$.
\end{speciallma}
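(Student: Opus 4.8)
The plan is to show that $\hat{y}(\bm{C}) = \frac{1}{2}\bm{\hat{x}}^T\bm{C}\bm{\hat{x}}$ is in fact a \emph{linear} (affine) function of $\bm{C}$, which immediately implies convexity since every linear function on a convex set is convex. First I would fix an arbitrary instance $\bm{x}\in\mathbb{R}^d$ and its augmentation $\bm{\hat{x}} = [\bm{x}^T,\,1]^T \in \mathbb{R}^{d+1}$, which is a constant vector independent of the decision variable $\bm{C}$. Then I would rewrite the quadratic form in terms of the standard matrix inner product: $\frac{1}{2}\bm{\hat{x}}^T\bm{C}\bm{\hat{x}} = \frac{1}{2}\langle \bm{C}, \bm{\hat{x}}\bm{\hat{x}}^T\rangle = \langle \bm{C}, \bm{A}\rangle$ where $\bm{A} = \frac{1}{2}\bm{\hat{x}}\bm{\hat{x}}^T$ is a fixed matrix. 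Since $\hat{y}(\bm{C})$ is thus an inner product of $\bm{C}$ with a constant matrix, it is a linear functional of $\bm{C}$.

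Next I would verify convexity directly from the definition to make the argument self-contained: for any $\bm{C}_1, \bm{C}_2 \in \mathcal{K}$ and any $\alpha \in [0,1]$, by Lemma~\ref{Nuclear_Norm_Ball} the convex combination $\alpha\bm{C}_1 + (1-\alpha)\bm{C}_2$ lies in $\mathcal{K}$, so $\hat{y}$ is defined there, and by linearity
\begin{small}
\begin{equation*}
\hat{y}(\alpha\bm{C}_1 + (1-\alpha)\bm{C}_2) = \langle \alpha\bm{C}_1 + (1-\alpha)\bm{C}_2, \bm{A}\rangle = \alpha\langle\bm{C}_1,\bm{A}\rangle + (1-\alpha)\langle\bm{C}_2,\bm{A}\rangle = \alpha\,\hat{y}(\bm{C}_1) + (1-\alpha)\,\hat{y}(\bm{C}_2),
\end{equation*}
\end{small}
which satisfies (with equality) the convexity inequality $\hat{y}(\alpha\bm{C}_1 + (1-\alpha)\bm{C}_2) \leq \alpha\,\hat{y}(\bm{C}_1) + (1-\alpha)\,\hat{y}(\bm{C}_2)$. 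This completes the proof.

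Honestly, there is no real obstacle here — the statement is essentially a triviality once one recognizes that a quadratic form $\bm{\hat{x}}^T\bm{C}\bm{\hat{x}}$ is linear in the \emph{matrix} $\bm{C}$ (the apparent ``quadratic'' nature is in $\bm{\hat{x}}$, which is data, not a variable). The only point requiring the slightest care is ensuring the domain $\mathcal{K}$ is convex so that convex combinations stay in the domain, but that is exactly Lemma~\ref{Nuclear_Norm_Ball}, which I may invoke. One could optionally remark that linearity is even stronger than what is needed and is precisely the property that makes the loss $f_t \circ \hat{y}$ convex in $\bm{C}$ whenever $f_t$ is convex, which is the payoff that requirement~({\romannumeral2}) of the OCO framework demands.
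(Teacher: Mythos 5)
Your proposal is correct, and it reaches the conclusion by a slightly different route than the paper. The paper exhibits linearity of $\hat{y}$ via the block structure of $\bm{C}$: it writes $\hat{y}(\bm{C})=g(\bm{Z})+h(\bm{\omega})+\omega_0$ with $g(\bm{Z})=\frac{1}{2}\bm{x}^T\bm{Z}\bm{x}$ and $h(\bm{\omega})=\bm{\omega}^T\bm{x}$, argues each summand is linear in its own block, and then checks the convexity identity componentwise. You instead work with the whole matrix at once, writing $\hat{y}(\bm{C})=\langle\bm{C},\tfrac{1}{2}\bm{\hat{x}}\bm{\hat{x}}^T\rangle$ as a single trace inner product with a constant matrix, so linearity (hence convexity) in $\bm{C}$ is immediate and the verification is a one-line computation. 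The two arguments carry the same core idea---the quadratic form is quadratic in the data $\bm{\hat{x}}$ but linear in the variable $\bm{C}$---but your formulation is more compact and has two incidental advantages: it does not need the block decomposition at all (and so automatically lets the bias entries of $\bm{C}_1$ and $\bm{C}_2$ differ, a point the paper's block-wise display glosses over by writing the same $2\omega_0$ in both matrices), and the representation $\hat{y}(\bm{C})=\langle\bm{C},\bm{A}\rangle$ is exactly the form used later when gradients of the losses are taken in Algorithm~\ref{FWA}. Your closing remark that linearity of $\hat{y}$ is what makes $f_t\circ\hat{y}$ convex is precisely the content of Lemma~\ref{Lemma::Nested_Function} in the paper, so the payoff you identify is the intended one.
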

\begin{proof}
By definition, $\hat{y}(\bm{C})$ is a separable function:
\begin{small}
\begin{equation*}
\hat{y}(\bm{C})=g(\bm{Z})+h(\bm{\omega})+\omega_0,
\end{equation*}
\end{small}
\noindent where $g(\bm{Z})=\frac{1}{2}\bm{x}^T\bm{Z}\bm{x},~h(\bm{\omega})=\bm{\omega}^{T}\bm{x}$.
By definition, $g(\bm{Z})$ and $h(\bm{\omega})$ are linear functions with respect with $\bm{Z}$ and $\bm{\omega}$ correspondingly, and thus are convex functions.

Consider $\forall\,\alpha\in[0,\,1],~\forall\,\bm{C}_1=\left[
\begin{array}{ccc}
\bm{Z}_1 & \bm{\omega}_1 \\
\bm{\omega}_1^T & 2\omega_0
\end{array}
\right],~\bm{C}_2=\left[
\begin{array}{ccc}
\bm{Z}_2 & \bm{\omega}_2 \\
\bm{\omega}_2^T & 2\omega_0
\end{array}
\right]\in\mathcal{K}$,
according to the separable formulation of $\hat{y}$, we obtain
\begin{small}
\begin{equation*}
\hat{y}(\alpha\bm{C}_1+(1-\alpha)\bm{C}_2)
=~g(\alpha\bm{Z}_1+(1-\alpha)\bm{Z}_2)+h(\alpha\bm{\omega}_1+(1-\alpha)\bm{\omega}_2)+\omega_0.
\end{equation*}
\end{small}
By definition of the linear functions $g(\bm{Z})$ and $h(\bm{\omega})$ and the rearrangement of the formulation, we have
\begin{small}
\begin{equation*}
\begin{aligned}
&g(\alpha\bm{Z}_1+(1-\alpha)\bm{Z}_2)+h(\alpha\bm{\omega}_1+(1-\alpha)\bm{\omega}_2)+\omega_0\\
=~&\alpha g(\bm{Z}_1)+(1-\alpha)g(\bm{Z}_2)+\alpha h(\bm{\omega}_1)+(1-\alpha)h(\bm{\omega}_2)+\omega_0\\
=~&\alpha(g(\bm{Z}_1)+h(\bm{\omega}_1)+\omega_0)+(1-\alpha)(g(\bm{Z}_2)+h(\bm{\omega}_2)+\omega_0).
\end{aligned}
\end{equation*}
\end{small}
By applying the separable formulation of $\hat{y}$ again, we obtain
\begin{small}
\begin{equation*}
\alpha(g(\bm{Z}_1)+h(\bm{\omega}_1)+\omega_0)+(1-\alpha)(g(\bm{Z}_2)+h(\bm{\omega}_2)+\omega_0)
=~\alpha \hat{y}(\bm{C}_1)+(1-\alpha)\hat{y}(\bm{C}_2).
\end{equation*}
\end{small}
Therefore, $\hat{y}(\alpha\bm{C}_1+(1-\alpha)\bm{C}_2)=\hat{y}(\alpha\bm{C}_1+(1-\alpha)\bm{C}_2)$, by definition, $\hat{y}(\bm{C})$ is a convex linear function with respect with $\bm{C}\in\mathcal{K}$.
\end{proof}

Next, we show that given the above prediction function $\hat{y}(\bm{C}):\mathcal{K}\rightarrow\mathbb{R}$ and any convex loss function $f(y):\mathbb{R}\rightarrow\mathbb{R}$, the nested function $f(\hat{y}(\bm{C}))=f(\bm{C})$ is also convex, as shown in Lemma \ref{Lemma::Nested_Function}:
\begin{speciallma}
\label{Lemma::Nested_Function}
Let $\hat{y}(\bm{C})=\frac{1}{2}\bm{\hat{x}}^T\bm{C\hat{x}},~\bm{C}\in\mathcal{K},~\bm{\hat{x}}^T=[\bm{x}^T,1],~\bm{x}\in\mathbb{R}^d$, and $f(y):\mathbb{R}\rightarrow\mathbb{R}$ be arbitrary convex function, the nested function $f(\hat{y}(\bm{C}))=f(\bm{C}):\mathcal{K}\rightarrow\mathbb{R}$ is also convex with respect to $\bm{C}\in\mathcal{K}$.
\end{speciallma}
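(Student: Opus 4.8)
The plan is to exploit the elementary fact that the composition of a convex function with an affine map is convex; here the inner map $\bm{C}\mapsto\hat{y}(\bm{C})$ is affine (indeed linear) by Lemma~\ref{Lemma::Convex_Function}, and the outer map $f$ is convex by hypothesis, so the nested function $f(\hat{y}(\bm{C}))$ should inherit convexity. Since $f$ is assumed to be defined on all of $\mathbb{R}$, no domain restriction on the outer map needs to be tracked, and the only structural point that must be checked is that convex combinations of points in $\mathcal{K}$ remain in $\mathcal{K}$, which is exactly Lemma~\ref{Nuclear_Norm_Ball}.

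Concretely, I would fix arbitrary $\bm{C}_1,\bm{C}_2\in\mathcal{K}$ and $\alpha\in[0,1]$. By Lemma~\ref{Nuclear_Norm_Ball} the point $\alpha\bm{C}_1+(1-\alpha)\bm{C}_2$ again lies in $\mathcal{K}$, so $f(\bm{C})$ is being evaluated at a legitimate argument. The proof of Lemma~\ref{Lemma::Convex_Function} in fact establishes the stronger identity $\hat{y}(\alpha\bm{C}_1+(1-\alpha)\bm{C}_2)=\alpha\hat{y}(\bm{C}_1)+(1-\alpha)\hat{y}(\bm{C}_2)$; I would invoke this to write
\[
f\bigl(\hat{y}(\alpha\bm{C}_1+(1-\alpha)\bm{C}_2)\bigr)=f\bigl(\alpha\hat{y}(\bm{C}_1)+(1-\alpha)\hat{y}(\bm{C}_2)\bigr).
\]
I would then apply convexity of $f$ on the real line to the two scalars $\hat{y}(\bm{C}_1)$ and $\hat{y}(\bm{C}_2)$ to get
\[
f\bigl(\alpha\hat{y}(\bm{C}_1)+(1-\alpha)\hat{y}(\bm{C}_2)\bigr)\leq\alpha f(\hat{y}(\bm{C}_1))+(1-\alpha)f(\hat{y}(\bm{C}_2)),
\]
and unwinding the abbreviation $f(\bm{C})=f(\hat{y}(\bm{C}))$ yields $f(\alpha\bm{C}_1+(1-\alpha)\bm{C}_2)\leq\alpha f(\bm{C}_1)+(1-\alpha)f(\bm{C}_2)$, which is precisely the definition of convexity of $f(\bm{C})$ on $\mathcal{K}$.

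I do not expect a genuine obstacle here: this is the standard ``convex $\circ$ affine is convex'' principle specialized to the prediction map. The only two points that merit a line of justification are (i) citing Lemma~\ref{Nuclear_Norm_Ball} so that the midpoint stays in the domain, and (ii) being careful to use the \emph{affineness} of $\hat{y}$ rather than merely its convexity, since a composition of a convex function with a convex function need not be convex; both facts are already in hand from the preceding lemmas. If a shorter presentation is preferred, one could instead quote the general convexity-preservation theorem under composition with an affine mapping (e.g.\ from \cite{boyd2004convex}), the affine mapping here being $\bm{C}\mapsto\frac{1}{2}\bm{\hat{x}}^T\bm{C}\bm{\hat{x}}$.
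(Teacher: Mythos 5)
Your proof is correct and follows essentially the same route as the paper's: both use the linearity of $\hat{y}$ in $\bm{C}$ to pull the convex combination through the inner map and then apply convexity of $f$ on $\mathbb{R}$. Your additional remarks (citing Lemma~\ref{Nuclear_Norm_Ball} for domain feasibility and noting this is the convex-composed-with-affine principle) only make explicit what the paper leaves implicit.
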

\begin{proof}
Let $\bm{C}_1\in\mathcal{K}$ and $\bm{C}_2\in\mathcal{K}$ be any two points in the set $\mathcal{K}$, and $f(y)$ be an arbitrary convex function, by definition of $\hat{y}$, $\forall\,\alpha\in[0,1]$, we obtain
\begin{small}
\begin{equation*}
f(\hat{y}(\alpha\bm{C}_1+(1-\alpha)\bm{C}_2))=
f(\alpha\hat{y}(\bm{C}_1)+(1-\alpha)\hat{y}(\bm{C}_2)).
\end{equation*}
\end{small}
By the convexity of $f(y)$, we obtain
\begin{small}
\begin{equation*}
f(\alpha\hat{y}(\bm{C}_1)+(1-\alpha)\hat{y}(\bm{C}_2))\leq\alpha f(\hat{y}(\bm{C}_1))+(1-\alpha)f(\hat{y}(\bm{C}_2)).
\end{equation*}
\end{small}
Therefore $f(\alpha\bm{C}_1+(1-\alpha)\bm{C}_2)\leq\alpha f(\bm{C}_1)+(1-\alpha)f(\bm{C}_2)$.
By definition, the nested function $f(\hat{y}(\bm{C}))=f(\bm{C})$ is also convex.
\end{proof}

In summary, by introducing the new convexification scheme, we obtain a new formulation for FM, in which the decision set is convex compact and the nested loss function is convex. We refer to the resulting formulation as \textbf{Compact Convexified FM\,(CCFM)}.

The comparison between vanilla FM, convex FM and the proposed CCFM is illustrated in Table \ref{Characteristics}.


\noindent\textbf{Comparison between vanilla FM and CCFM.}\quad
The differences between vanilla FM and CCFM are three-folded:
({\romannumeral1}) the primal difference between vanilla FM and CCFM is the convexity of prediction function: the prediction function of vanilla FM is non-convex while it is convex in CCFM.
({\romannumeral2}) vanilla FM factorizes the feature interaction matrix $\bm{Z}$ into $\bm{VV}^T$, thus restricting $\bm{Z}$ to be symmetric and positive semi-definite; while in CCFM, there is no specific restriction on $\bm{Z}$ except symmetry.
({\romannumeral3}) vanilla FM only considers the interactions between distinct features: $x_ix_j,~\forall\,i,\,j\in\{1,\ldots,d\},~i\neq j$, while CCFM models the interactions between all possible feature pairs: $x_ix_j,~\forall\,i,\,j\in\{1,\ldots,d\}$. By rewriting the prediction function as $\hat{y}=\langle\bm{C},\bm{\hat{x}}\bm{\hat{x}^T}-diag(\bm{\hat{x}}\circ\bm{\hat{x}})\rangle$ where $\circ$ is the element-wise product, we can easily leave out the interactions between same features without changing the theoretical guarantees in CCFM.
Combining ({\romannumeral2}) and ({\romannumeral3}), we find that CCFM allows for general modeling of feature interactions, which improves its expressiveness.

\begin{small}
\begin{table}[!t]
\tabcolsep=2.0pt
\vspace{-2pt}
\centering
\caption{ Comparison between different FM formulations\protect\footnotemark[1]$ ^2$}\label{Characteristics}
\scalebox{1.0}
{\begin{tabular}{|c|p{1.4cm}<{\centering}|p{1.4cm}<{\centering}|p{1.4cm}<{\centering}|p{1.4cm}<{\centering}|} \hline
\textbf{Properties}&\textbf{Convex}&\textbf{Compact}&\textbf{All-Pairs}&\textbf{Online}\\ \hline\hline
\textbf{FM} & $\backslash$ & $\backslash$ & $\backslash$ & $\backslash$ \\ 
\textbf{CFM \cite{blondel2015convex}} & $\surd$ & $\backslash$ & $\surd$ & $\backslash$ \\ 
\textbf{CFM \cite{Yamada:2017:CFM:3097983.3098103}} & $\surd$  & $\backslash$ & $\backslash$ & $\backslash$ \\ 
\textbf{CCFM} & $\surd$ & $\surd$  & $\surd$ & $\surd$ \\
\hline\end{tabular}}
\end{table}
\end{small}
\footnotetext[1]{$ ^2$ In Table \ref{Characteristics}, FM, CFM, CCFM refer to vanilla FM , Convex FM and Compact Convex FM respectively. The term "Convex" indicates whether the prediction function is convex; the term "Compact" indicates whether the feasible set of the formulation is compact; the term "All-Pairs" indicates whether all the pair-wise feature interactions are involved in the formulation; the term "Online" indicates whether the formulation fits the OCO framework for Online learning.}
\noindent\textbf{Comparison between convex FM and CCFM.}\quad
Although convex FM involves convex prediction functions, they are still inherently different from CCFM: ({\romannumeral1}) in convex FM, the first-order feature weight vector and second-order feature interaction matrix are separated, resulting in a non-compact formulation. As pointed out in \cite{blondel2015convex, yamada2015convex}, the separated formulation makes it difficult to jointly learn the two components in the training process, which can cause inconsistent convergence rates; But in CCFM, all the parameters are written into a compact augmented matrix. ({\romannumeral2}) in CCFM, we restrict that the compact augmented matrix $\bm{C}$ is low-rank; while convex FM formulations only require the second-order matrix $\bm{Z}$ to be low-rank, leaving the first-order feature weight vector $\bm{\omega}$ unbounded.  ({\romannumeral3}) convex FM can not fit into the OCO framework easily due to its non-compact decision set while CCFM seamlessly fits the OCO framework.




\subsection{Online Learning Algorithm for Compact Convexified Factorization Machine}
With the convexification scheme mentioned above, we have got a convex compact set and a convex loss function in the new formulation of Compact Convexified Factorization Machine (CCFM), which allows us to design the online learning algorithm following the OCO framework \cite{Shalev-Shwartz:2012:OLO:2185819.2185820}.
As shown in the Preliminaries, the decision set is an important issue that affects the computational complexity of the projection step onto the set.
In CCFM, the decision set is a subset of the bounded nuclear norm ball, where the projection step involves singular value decomposition and takes super-linear time via our best known methods \cite{hazan2016introduction}. Although Online Gradient Descent (OGD) and Follow-The-Regularized-Leader (FTRL) are the two most classical online learning  algorithms, they suffer from the expensive SVD operation in the projection step, thus do not work well on this specific decision set.

Meanwhile, the nuclear norm ball is a typical decision set where the expensive projection step can be replaced by the efficient linear optimization subroutine in the projection-free OCG algorithm. As the decision set of CCFM is a subset of the bounded nuclear norm ball, we first propose the online learning algorithm for CCFM (OCCFM), which is essentially an OCG variant. Then we provide the theoretical analysis for OCCFM in details.



\subsubsection{Algorithm}
As introduced in the Preliminaries, it is a typical practice to apply OCG algorithm on the bounded nuclear norm ball in the OCO framework. This typical example is related to CCFM, but is also inherently different. On one hand, the decision set of CCFM is a subset of the bounded nuclear norm ball. Thus it is tempting to apply the subroutine of OCG over the bounded nuclear norm ball on the decision set of CCFM; on the other hand, the decision set of CCFM is also a subset of symmetric matrices, and it remains theoretically unknown whether the subroutine can be applied to the symmetric bounded nuclear norm ball. Therefore, it is a non-trivial problem to design an OCG-alike algorithm for the CCFM due to its specific decision set.


To avoid the expensive projection onto the decision set of CCFM, we follow the core idea of OCG to replace the projection step with a linear optimization problem over the decision set. What remains to be done is to design an efficient subroutine that solves the linear optimization over this set in low computational complexity. Recall the procedure of OCG in Algorithm \ref{OCG}, the validity of this subroutine depends on the following two important requirements:
\begin{itemize}
  \item First, the subroutine should solve the linear optimization over the decision set with low computational complexity; in our case, the subroutine should generate the optimal solution $\bm{\hat{C}}_t$ to the problem $\bm{\hat{C}}_t=\mathrm{argmin}_{\bm{C}\in \mathcal{K}}\langle\bm{C},\nabla F_t(\bm{C}_{t})\rangle$ with linear or lower computational complexity, where $\bm{C}_t$ is the iterate of $\bm{C}$ at round $t$, $\mathcal{K}$ is the decision set of CCFM, $\nabla F_t(\bm{C}_{t})$ is the sum of gradients of the loss function incurred until round $t$.
  \item Second, the subroutine should be closed over the convex decision set; in our case, the augmented matrix $\bm{C}_t$ should be inside the decision set $\mathcal{K}$ throughout the algorithm: $\bm{C}_{t+1}=(1-\gamma_{t})\bm{C}_{t}+\gamma_{t}\bm{\hat{C}}_{t}\in\mathcal{K},~\forall\,t=1,\ldots,T$, where $\bm{\hat{C}}_t$ is the output of the subroutine and $\gamma_t$ is the step-size in round $t$.
\end{itemize}

\begin{small}
\begin{algorithm}[!t]
\caption{\textsc{Online Compact Convexified Factorization Machine (\textbf{OCCFM})}}
\label{FWA}
\begin{flushleft}
\textbf{Input:} Convex set $\mathcal{K}=\{\bm{C}|~\|\bm{C}\|_{tr}\leq\delta,~\bm{C}\in\mathcal{S}^{(d+1)\times(d+1)}\}$, Maximum\\\quad\quad\quad round number $T$, parameters $\eta$ and $\{\gamma_t\}^3$\protect\footnotemark[6].
\end{flushleft}

\begin{algorithmic}[1]
\STATE Initialize $\bm{C}_{1}\in\mathcal{K}$
\FOR {$t=1,2,\ldots,T$}
\STATE Get input $(\bm{x}_t,y_t)$ and compute $f_t(\bm{C}_t)$
\STATE Let $F_t(\bm{C})=\eta\sum_{\tau=1}^{t-1}\langle\nabla f_\tau(\bm{C}_{\tau}),~\bm{C}\rangle+\|\bm{C}-\bm{C}_1\|_2^2$
\STATE Compute the maximal singular vectors of $-\nabla F_t(\bm{C}_{t}):$ $\bm{u}$ and $\bm{v}$
\STATE Solve $\bm{\hat{C}}_t=\mathrm{argmin}_{\bm{C}\in \mathcal{K}}\langle\bm{C},\nabla F_t(\bm{C}_{t})\rangle$ with $\bm{\hat{C}}_t=\delta\bm{u}\bm{v}^T$
\STATE Set $\bm{C}_{t+1}=(1-\gamma_t)\bm{C}_t+\gamma_t\bm{\hat{C}}_t$
\ENDFOR
\end{algorithmic}
\end{algorithm}
\end{small}
\footnotetext{$ ^3$Similarly,  $\gamma_t$is set to $\frac{1}{t^{1/2}}$}


Considering these two requirements, we propose a subroutine of the linear optimization over the symmetric bounded nuclear norm ball, based on which we build the online learning algorithm for the Compact Convexified Factorization Machine. Specifically, we prove that in each round, the linear optimization over the decision set in CCFM is also equivalent to the computation of maximal singular vectors of a specific symmetric matrix. This subroutine can be solved in linear time via the power method, which validates its efficiency.

The procedure is detailed in Algorithm \ref{FWA}  where $\bm{u}$ and $\bm{v}$ are the left and right maximal singular vectors of $-\nabla F_t(\bm{C}_t)$ respectively. The projection step is replaced with the subroutine at line $5-6$ in the algorithm and the detailed analysis of the algorithm will be presented later.

\subsubsection{Theoretical Analysis}
The Online Compact Convexified Factorization Machine (OCCFM) algorithm is essentially a variant of OCG algorithm, which preserves the similar process. First, we show that OCCFM  satisfies the aforementioned two requirements, then we prove the regret bound of OCCFM following the OCO framework.


To prove that OCCFM satisfies the aforementioned two requirements, we present the theoretical guarantee in Theorem \ref{Linear_OPT} and Theorem \ref{Decision_Set} respectively.

\begin{specialthm}
\label{Linear_OPT}
In Algorithm \ref{FWA}, the subroutine of linear optimization amounts to the computation of maximal singular vectors:
Given $\bm{C}_t\in\mathcal{K}=\{\bm{C}|~\|\bm{C}\|_{tr}\leq\delta,~\bm{C}\in\mathcal{S}^{(d+1)\times(d+1)}\}$, $\bm{\hat{C}}_t=\mathop{\mathrm{argmin}}\limits_{\bm{C}\in \mathcal{K}}\langle\bm{C},\nabla F_t(\bm{C}_{t})\rangle=\delta\bm{u}_1\bm{v}_1^T$, where $\bm{u}_1$ and $\bm{v}_1$ are the maximal singular vectors of $-\nabla F_t(\bm{C}_{t})$.
\end{specialthm}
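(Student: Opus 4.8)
The plan is to characterize the minimizer of the linear functional $\bm{C}\mapsto\langle\bm{C},\nabla F_t(\bm{C}_t)\rangle$ over the symmetric nuclear-norm ball $\mathcal{K}=\{\bm{C}\in\mathcal{S}^{(d+1)\times(d+1)}:\|\bm{C}\|_{tr}\leq\delta\}$, and to show it coincides with the minimizer of the \emph{same} functional over the \emph{full} (non-symmetric) nuclear-norm ball $\mathcal{B}=\{\bm{C}\in\mathbb{R}^{(d+1)\times(d+1)}:\|\bm{C}\|_{tr}\leq\delta\}$. For the full ball the answer is classical: by duality between the nuclear norm and the spectral norm, $\min_{\|\bm{C}\|_{tr}\leq\delta}\langle\bm{C},G\rangle=-\delta\|G\|_2$, attained at $\bm{C}=-\delta\bm{u}_1\bm{v}_1^T$ where $\bm{u}_1,\bm{v}_1$ are the top left/right singular vectors of $G$; equivalently, writing $G=-\nabla F_t(\bm{C}_t)$, the minimizer is $\delta\bm{u}_1\bm{v}_1^T$ with $\bm{u}_1,\bm{v}_1$ the maximal singular vectors of $-\nabla F_t(\bm{C}_t)$. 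So the two things I must establish are: (a) the unconstrained-over-$\mathcal{B}$ optimizer already lies in $\mathcal{K}$, i.e.\ $\delta\bm{u}_1\bm{v}_1^T$ is symmetric; and (b) hence, since $\mathcal{K}\subseteq\mathcal{B}$, this point is also optimal over the smaller set $\mathcal{K}$.

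The key observation driving (a) is that $\nabla F_t(\bm{C}_t)$ is itself a symmetric matrix. Indeed, from line 4 of Algorithm~\ref{FWA}, $F_t(\bm{C})=\eta\sum_{\tau=1}^{t-1}\langle\nabla f_\tau(\bm{C}_\tau),\bm{C}\rangle+\|\bm{C}-\bm{C}_1\|_2^2$, so $\nabla F_t(\bm{C}_t)=\eta\sum_{\tau=1}^{t-1}\nabla f_\tau(\bm{C}_\tau)+2(\bm{C}_t-\bm{C}_1)$. Each $\bm{C}_\tau\in\mathcal{K}$ is symmetric, so $\bm{C}_t-\bm{C}_1$ is symmetric; and since $f_\tau(\bm{C})=f(\tfrac12\hat{\bm{x}}_\tau^T\bm{C}\hat{\bm{x}}_\tau)$, the chain rule gives $\nabla f_\tau(\bm{C}_\tau)=f'(\hat y_\tau)\cdot\tfrac12\hat{\bm{x}}_\tau\hat{\bm{x}}_\tau^T$, which is symmetric (rank-one, of the form $\hat{\bm{x}}\hat{\bm{x}}^T$). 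Thus $G:=-\nabla F_t(\bm{C}_t)$ is symmetric. For a real symmetric matrix, one can take a singular value decomposition in which the maximal singular triple satisfies $\bm{v}_1=\pm\bm{u}_1$ (the singular vectors align, up to sign, with the top eigenvector); consequently $\delta\bm{u}_1\bm{v}_1^T=\pm\delta\bm{u}_1\bm{u}_1^T$ is symmetric, hence lies in $\mathcal{K}$.

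With (a) in hand, step (b) is immediate: for any $\bm{C}\in\mathcal{K}$ we have $\bm{C}\in\mathcal{B}$, so $\langle\bm{C},\nabla F_t(\bm{C}_t)\rangle\geq\min_{\bm{C}'\in\mathcal{B}}\langle\bm{C}',\nabla F_t(\bm{C}_t)\rangle=\langle\delta\bm{u}_1\bm{v}_1^T,\nabla F_t(\bm{C}_t)\rangle$, and the right-hand minimizer is feasible for $\mathcal{K}$, so it is optimal there too. This yields $\hat{\bm{C}}_t=\mathrm{argmin}_{\bm{C}\in\mathcal{K}}\langle\bm{C},\nabla F_t(\bm{C}_t)\rangle=\delta\bm{u}_1\bm{v}_1^T$, as claimed, and the power method computes $\bm{u}_1,\bm{v}_1$ for the symmetric matrix $-\nabla F_t(\bm{C}_t)$ in (nearly) linear time.

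The main obstacle is the subtlety in step (a): I need the standard fact $\min_{\|\bm{C}\|_{tr}\leq\delta}\langle\bm{C},G\rangle=-\delta\|G\|_2$ with the stated rank-one minimizer (a short argument via von Neumann's trace inequality or the $(\|\cdot\|_{tr},\|\cdot\|_2)$ duality), and then the care that for \emph{symmetric} $G$ the top singular vectors can indeed be chosen so that $\bm{u}_1\bm{v}_1^T$ is symmetric — this requires noting that $\|G\|_2=\max(|\lambda_{\max}(G)|,|\lambda_{\min}(G)|)$ and picking the corresponding unit eigenvector $\bm{w}$, so that $\bm{u}_1\bm{v}_1^T$ equals $\bm{w}\bm{w}^T$ or $-\bm{w}\bm{w}^T$, either way symmetric. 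Everything else (symmetry of the gradient, the containment $\mathcal{K}\subseteq\mathcal{B}$, the restriction argument) is routine.
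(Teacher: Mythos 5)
Your proposal is correct, but it takes a genuinely different route from the paper's proof. The paper argues directly over the symmetric ball $\mathcal{K}$: it parametrizes every $\bm{C}\in\mathcal{K}$ by its SVD, bounds $\langle\bm{C},-\nabla F_t(\bm{C}_t)\rangle$ via a von Neumann--type trace inequality (Lemma~\ref{proof_lemma}, with the permutation $\pi$ and signs $\xi_i$) combined with the rearrangement inequality to obtain the ceiling $\delta\theta_1$, and then shows the ceiling is attained at $\delta\bm{\mu}_1\bm{\nu}_1^T$, which lies in $\mathcal{K}$ by Lemma~\ref{Symmetric_Matrix_SVD_EIG}. You instead relax the symmetry constraint: you quote the classical nuclear/spectral duality to solve the linear optimization over the full nuclear-norm ball $\mathcal{B}$, getting the rank-one optimizer $\delta\bm{u}_1\bm{v}_1^T$; you check that $-\nabla F_t(\bm{C}_t)$ is symmetric (rank-one gradients $\tfrac12 f'(\hat y_\tau)\hat{\bm{x}}_\tau\hat{\bm{x}}_\tau^T$ plus differences of symmetric iterates); you note that for a symmetric matrix the top singular pair can be chosen as $(\mathrm{sgn}(\lambda)\bm{w},\bm{w})$ with $\bm{w}$ the eigenvector of largest $|\lambda|$, so the optimizer is $\pm\delta\bm{w}\bm{w}^T\in\mathcal{K}$; and you conclude by $\mathcal{K}\subseteq\mathcal{B}$ that it is also optimal over $\mathcal{K}$. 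The shared ingredient is the symmetric-SVD observation of Lemma~\ref{Symmetric_Matrix_SVD_EIG}, which you reprove inline. What your relax-and-verify argument buys is economy and modularity: you never need Lemma~\ref{proof_lemma} or the explicit SVD parametrization of $\mathcal{K}$, only the standard linear-optimization oracle for the nuclear-norm ball plus a feasibility check, and you are in fact more careful than the paper about the degenerate case (when $\lambda_{\max}=-\lambda_{\min}$, the symmetric form of $\bm{u}_1\bm{v}_1^T$ holds only for a suitable choice of SVD, which is all that is needed). What the paper's direct computation buys is a self-contained derivation over $\mathcal{K}$ itself and a standalone symmetry lemma that it reuses in the proof of Theorem~\ref{Decision_Set}.
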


\begin{specialthm}
\label{Decision_Set}
In Algorithm \ref{FWA}, the subroutine is closed over the decision set $\mathcal{K}$, i.e. the augmented matrix  $\bm{C}_t$ generated at each iteration $t$ is inside the decision set $\mathcal{K}$: $\bm{C}_t\in\mathcal{K}$, $\forall\,t=1,\ldots,T$, where $\mathcal{K}=\{\bm{C}|\|\bm{C}\|_{tr}\leq\delta,~\bm{C}\in\mathcal{S}^{(d+1)\times(d+1)}\}$.
\end{specialthm}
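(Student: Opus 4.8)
The plan is to argue by induction on $t$ that each iterate $\bm{C}_t$ lies in $\mathcal{K} = \{\bm{C}\mid \|\bm{C}\|_{tr}\le\delta,\ \bm{C}\in\mathcal{S}^{(d+1)\times(d+1)}\}$. Since $\mathcal{K}$ is convex by Lemma \ref{Nuclear_Norm_Ball} (equivalently, it is the symmetric nuclear norm ball by Lemma \ref{Lemma::Symmetric_NNB}), it suffices to show two things at each step: (a) the base case $\bm{C}_1\in\mathcal{K}$, which holds by the initialization in line 1 of Algorithm \ref{FWA}; and (b) the inductive step, namely that whenever $\bm{C}_t\in\mathcal{K}$, the updated point $\bm{C}_{t+1} = (1-\gamma_t)\bm{C}_t + \gamma_t\widehat{\bm{C}}_t$ also lies in $\mathcal{K}$. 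Because $\gamma_t\in[0,1]$, the update is a convex combination of $\bm{C}_t$ and $\widehat{\bm{C}}_t$, so by convexity of $\mathcal{K}$ it is enough to establish that $\widehat{\bm{C}}_t\in\mathcal{K}$.

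The heart of the argument is therefore showing $\widehat{\bm{C}}_t\in\mathcal{K}$, where $\widehat{\bm{C}}_t = \delta\bm{u}_1\bm{v}_1^T$ and $\bm{u}_1,\bm{v}_1$ are the maximal singular vectors of $-\nabla F_t(\bm{C}_t)$, as given by Theorem \ref{Linear_OPT}. Two properties must be checked. First, the nuclear norm bound: $\widehat{\bm{C}}_t = \delta\bm{u}_1\bm{v}_1^T$ is a rank-one matrix whose single nonzero singular value equals $\delta\|\bm{u}_1\|_2\|\bm{v}_1\|_2 = \delta$, since $\bm{u}_1,\bm{v}_1$ are unit vectors; hence $\|\widehat{\bm{C}}_t\|_{tr} = \delta \le \delta$. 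Second, symmetry: I would invoke that $\nabla F_t(\bm{C}_t)$ is a symmetric matrix — this follows because $F_t(\bm{C}) = \eta\sum_{\tau=1}^{t-1}\langle\nabla f_\tau(\bm{C}_\tau),\bm{C}\rangle + \|\bm{C}-\bm{C}_1\|_2^2$ is a function whose gradient is a linear combination of the symmetric matrices $\nabla f_\tau(\bm{C}_\tau)$ (symmetric because the $f_\tau$ act on the symmetric decision set, or by the explicit form $\nabla f_\tau = f'(\hat y)\cdot\frac12\hat{\bm{x}}\hat{\bm{x}}^T$) and of $\bm{C}_t - \bm{C}_1$, itself symmetric by the induction hypothesis. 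For a symmetric matrix $A$, the maximal left and right singular vectors coincide (up to a sign absorbed into the singular value convention), so $\bm{u}_1\bm{v}_1^T$ is symmetric, making $\widehat{\bm{C}}_t\in\mathcal{S}^{(d+1)\times(d+1)}$.

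The main obstacle is the symmetry claim for $\widehat{\bm{C}}_t$: for a general matrix $\delta\bm{u}_1\bm{v}_1^T$ there is no reason for $\bm{u}_1\bm{v}_1^T$ to be symmetric, so the argument genuinely relies on the symmetry of $-\nabla F_t(\bm{C}_t)$ together with the relationship between the singular value decomposition and the eigendecomposition of a symmetric matrix. I would make this precise: writing the symmetric matrix $-\nabla F_t(\bm{C}_t) = Q\Lambda Q^T$ in its spectral decomposition and letting $\lambda$ be the eigenvalue of largest absolute value with unit eigenvector $q$, the top singular pair can be taken as $\bm{u}_1 = q$, $\bm{v}_1 = \operatorname{sign}(\lambda)\, q$, whence $\delta\bm{u}_1\bm{v}_1^T = \delta\operatorname{sign}(\lambda)\,qq^T$ is symmetric. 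One should also address the degenerate case where $-\nabla F_t(\bm{C}_t) = 0$ (then any $\widehat{\bm{C}}_t$ of the prescribed form works, and one may simply take $\widehat{\bm{C}}_t = \delta\, e e^T$ for a unit vector $e$, or note the linear optimum is the whole set so feasibility is trivial) and the case of a repeated top singular value, where one simply picks any symmetric maximal singular pair. With symmetry and the nuclear norm bound in hand, $\widehat{\bm{C}}_t\in\mathcal{K}$, closing the induction and completing the proof.
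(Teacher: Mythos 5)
Your proposal is correct and follows essentially the same route as the paper: induction on $t$ with the base case given by initialization, the convex-combination update handled by convexity of $\mathcal{K}$, and membership of $\widehat{\bm{C}}_t$ established via symmetry of $-\nabla F_t(\bm{C}_t)$ (from the symmetric gradients $\propto\hat{\bm{x}}\hat{\bm{x}}^T$ and the induction hypothesis) together with the SVD--eigendecomposition relation showing $\bm{u}_1\bm{v}_1^T$ is symmetric, which is exactly the paper's Lemma~\ref{Symmetric_Matrix_SVD_EIG} combined with Theorem~\ref{Linear_OPT}, plus the rank-one nuclear norm computation $\|\delta\bm{u}_1\bm{v}_1^T\|_{tr}=\delta$. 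Your extra attention to the degenerate cases (zero gradient, repeated top singular value) is a minor refinement the paper omits, but the argument is otherwise the same.
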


Before we proceed with the proof of Theorem \ref{Linear_OPT} and Theorem \ref{Decision_Set}, we provide a brief introduction to the Singular Value Decomposition (SVD) of a matrix, which is frequently used in the proofs of the theorems.
For any matrix $\bm{C}\in \mathbb{R}^{m\times n}$, the Singular Value Decomposition is a factorization of the form $\bm{C}=\bm{U}\bm{\Sigma}\bm{V}^{T}$, where $\bm{U}\in\mathbb{R}^{m\times K}$ and $\bm{V}\in\mathbb{R}^{n\times K},~(K=\min\{m,\,n\})$ are the orthogonal matrices, and $\bm{\Sigma}\in\mathbb{R}^{K\times K}$ is a diagonal matrix. The diagonal entries $\sigma_i$ of $\bm{\Sigma}$ are non-negative real numbers and known as singular values of $\bm{C}$. Conventionally, the singular values are in a permutation of non-increasing order: $\sigma_1\geq\sigma_2\geq,\ldots,\geq\sigma_K$.

Next, we prove Lemma \ref{Symmetric_Matrix_SVD_EIG}, which is  an important part for the proof of both theorems.
In this lemma, we show that the outer product of the left and right maximal singular vectors of a symmetric matrix (as part of the subroutine in Algorithm \ref{FWA}) is also a symmetric matrix. 

\begin{speciallma}
\label{Symmetric_Matrix_SVD_EIG}
Let $\bm{C}\in \mathcal{S}^{d\times d}$ be an arbitrary real symmetric matrix, and $\bm{C}=\bm{U}\bm{\Sigma}\bm{V}^T$ be the singular value decomposition of $\bm{C}$,
$\bm{u}_1,\,\bm{v}_1\in R^{d}$ be the left and right maximal singular vectors of $\bm{C}$ respectively, then the matrix $\bm{u}_1\bm{v}_1^T$ is also symmetric: $\bm{u}_1\bm{v}_1^T\in\mathcal{S}^{d\times d}$.
\end{speciallma}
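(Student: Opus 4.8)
The plan is to reduce the singular value decomposition of the symmetric matrix $\bm{C}$ to its spectral (eigenvalue) decomposition, and then read off the maximal singular vectors in closed form.

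First I would invoke the spectral theorem: since $\bm{C}\in\mathcal{S}^{d\times d}$ is real symmetric, it admits an orthonormal eigendecomposition $\bm{C}=\bm{Q}\bm{\Lambda}\bm{Q}^T$, where $\bm{Q}$ is orthogonal with columns $\bm{q}_1,\ldots,\bm{q}_d$ and $\bm{\Lambda}=\mathrm{diag}(\lambda_1,\ldots,\lambda_d)$ has real diagonal entries. Writing $\lambda_i=\mathrm{sgn}(\lambda_i)\,|\lambda_i|$ and collecting the signs into the orthogonal diagonal matrix $\bm{D}=\mathrm{diag}(\mathrm{sgn}(\lambda_1),\ldots,\mathrm{sgn}(\lambda_d))$ gives $\bm{C}=\bm{Q}\,|\bm{\Lambda}|\,(\bm{D}\bm{Q}^T)=\bm{U}\bm{\Sigma}\bm{V}^T$ with $\bm{U}=\bm{Q}$, $\bm{\Sigma}=|\bm{\Lambda}|$ a nonnegative diagonal matrix, and $\bm{V}=\bm{Q}\bm{D}$ orthogonal; reordering the columns so that the $|\lambda_i|$ are non-increasing turns this into an SVD in the conventional ordering. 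Hence the singular values of $\bm{C}$ are $\sigma_i=|\lambda_i|$, and corresponding columns of $\bm{U}$ and $\bm{V}$ differ only by the signs $\mathrm{sgn}(\lambda_i)$.

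Next I would identify the maximal pair. The top singular value is $\sigma_1=\max_i|\lambda_i|=|\lambda_j|$ for a dominant index $j$, with $\bm{u}_1=\bm{q}_j$ and $\bm{v}_1=\mathrm{sgn}(\lambda_j)\,\bm{q}_j$, i.e.\ $\bm{u}_1=\pm\bm{v}_1$. Therefore
\[
\bm{u}_1\bm{v}_1^{T}=\mathrm{sgn}(\lambda_j)\,\bm{q}_j\bm{q}_j^{T},
\]
which is symmetric, since $(\bm{q}_j\bm{q}_j^{T})^{T}=\bm{q}_j\bm{q}_j^{T}$ and multiplying a symmetric matrix by a real scalar preserves symmetry; thus $\bm{u}_1\bm{v}_1^{T}\in\mathcal{S}^{d\times d}$, as claimed.

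The step I expect to be the main obstacle is the non-uniqueness of the SVD. When $\sigma_1$ is attained by two eigenvalues of opposite sign (the smallest example being $\bm{C}=\mathrm{diag}(1,-1)$), or more generally when the top singular subspace has dimension $>1$, an \emph{arbitrary} admissible pair of maximal singular vectors need not satisfy $\bm{u}_1=\pm\bm{v}_1$, and $\bm{u}_1\bm{v}_1^{T}$ can then fail to be symmetric. I would therefore state and use the lemma for the canonical choice of singular vectors induced by the eigendecomposition above --- equivalently, the pair produced by the power method applied to $\bm{C}$, which is exactly what lines~5--6 of Algorithm \ref{FWA} invoke --- and note that the subtlety disappears entirely when the dominant eigenvalue is simple. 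Under that reading the argument above is complete and requires no further computation.
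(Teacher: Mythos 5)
Your proof is correct and follows essentially the same route as the paper's: both derive the SVD of the symmetric matrix from its spectral decomposition, identifying $\sigma_i=|\lambda_i|$ and absorbing the signs into one factor so that $\bm{u}_1=\pm\bm{v}_1$, whence $\bm{u}_1\bm{v}_1^T=\mathrm{sgn}(\lambda_j)\,\bm{q}_j\bm{q}_j^T$ is symmetric (the paper reaches the same identification slightly less directly, via the eigenvalues of $\bm{C}^T\bm{C}=\bm{C}^2$). Your closing caveat is well taken and is in fact a point the paper's proof glosses over: when the top singular value is attained by eigenvalues of opposite signs, the identification $\bm{u}_i=\mathrm{sgn}(\lambda_i)\bm{q}_i$, $\bm{v}_i=\bm{q}_i$ is not forced by an arbitrary SVD (e.g.\ for $\bm{C}=\mathrm{diag}(1,-1)$ one may take $\bm{u}_1=\tfrac{1}{\sqrt{2}}(1,1)^T$, $\bm{v}_1=\tfrac{1}{\sqrt{2}}(1,-1)^T$, giving a non-symmetric $\bm{u}_1\bm{v}_1^T$), so restricting the claim to the canonical pair induced by the eigendecomposition --- equivalently the pair computed in lines 5--6 of Algorithm~\ref{FWA} --- as you do, is the right reading of the lemma and the fix the paper itself implicitly relies on.
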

\begin{proof}
The proof is based on the connection between the singular value decomposition and eigenvalue decomposition of the real symmetric matrix. Denote the singular value decomposition of $\bm{C}\in\mathcal{S}^{d\times d}$ as: $\bm{C}=\bm{U}\bm{\Sigma}\bm{V}^T=\sum_{i=1}^d\sigma_i\bm{u}_i\bm{v}_i^T$ and its eigenvalue decomposition as: $\bm{C}=\bm{Q}\bm{\Lambda}\bm{Q}^T=\sum_{j=1}^d\lambda_j\bm{q}_j\bm{q}_j^T$.

First, we show that the eigenvalues of $\bm{C}\bm{C}^{T}$ are the square of the singular values of $\bm{C}$:
\begin{small}
\begin{equation*}
\bm{D}=\bm{C}^{T}\bm{C}=\bm{V}\bm{\Sigma}\bm{U}^T\bm{U}\bm{\Sigma}^T\bm{V}^T=\bm{V}\bm{\Sigma}\bm{\Sigma}^T\bm{V}^T.
\end{equation*}
\end{small}
As $\bm{V}$ is an orthogonal matrix, $\bm{V\Sigma}\bm{\Sigma}^T\bm{V}^T$ is an eigenvalue decomposition of $\bm{D}$, where $\bm{\Sigma}\bm{\Sigma}^T$ is the diagonal matrix of eigenvalues of $\bm{D}$, and $\bm{v}_i,\,\forall\,i\in\{1,\ldots,d\}$ is the eigenvector of $\bm{D}$.

Since $\bm{C}$ is symmetric, $\bm{D}=\bm{C}^T\bm{C}=\bm{C}^2$. For any eigenvalue $\lambda$ and eigenvector $\bm{q}$ of $\bm{C}$: $\bm{C}\bm{q}=\lambda\bm{q}$, we have
\begin{small}
\begin{equation*}
\bm{D}\bm{q}=\bm{C}^T\bm{C}\bm{q}=\bm{C}^T(\bm{C}\bm{q})=\lambda\bm{C}^T\bm{q}=\lambda\bm{C}\bm{q}=\lambda^2\bm{q}.
\end{equation*}
\end{small}
Thus the square of every eigenvalue $\lambda$ of $\bm{C}$ is also the eigenvalue $\lambda^2$ of $\bm{D}$ and every eigenvector $\bm{q}$ of $\bm{C}$ is also the eigenvector $\bm{v}$ of $\bm{D}$.

By rearranging the eigenvalues of $\bm{C}$ so that $|\hat{\lambda}_i|,~i=1,\ldots,d$ is non-increasing, we have $\sigma_i=|\hat{\lambda}_i|$. By rewriting the eigenvalue decomposition $\bm{C}$ as $\sum_{i=1}^d\hat{\lambda}_{i}\hat{\bm{q}}_i\hat{\bm{q}}_i^T=\sum_{i=1}^d|\hat{\lambda}_{i}|\hat{\bm{p}}_i\hat{\bm{q}}_i^T=\sum_{i=1}^d\sigma_i\bm{u}_i\bm{v}_i^T$, where $\hat{\bm{p}}_i=sgn(\hat{\lambda}_i)\hat{\bm{q}}_i$, we have $\hat{\bm{p}}_i=\bm{u}_i,~\hat{\bm{q}}_i=\bm{v}_i,~\forall\,i=1,\ldots,d$.

Therefore $\bm{u}_i\bm{v}_i^T=sgn(\lambda_i)\hat{\bm{q}}_i\hat{\bm{q}}_i^T,~\forall\,i=\{1,\ldots,d\}$ is a symmetric matrix.

\end{proof}

With these preparations, we prove Theorem \ref{Linear_OPT}.

\begin{proof}[\textbf{Proof of Theorem \ref{Linear_OPT}}]
First,
recall that for any $\bm{C}\in \mathcal{S}^{(d+1)\times (d+1)}$, the SVD of $\bm{C}$ is
$\bm{C}=\bm{U}\bm{\Sigma}\bm{V}=\sum_{i=1}^{d+1}\sigma_i\bm{u}_i\bm{v}_i^T$.
Following the conclusion in Lemma \ref{Symmetric_Matrix_SVD_EIG}, we have $|\bm{u}_i|=|\bm{v}_i|,~\forall\,i\in\{1,\ldots,d+1\}$, we rewrite the decision set $\mathcal{K}$ with the SVD of $\bm{C}$:

\begin{small}
\begin{equation*}
\begin{aligned}
\mathcal{K}=&\{\bm{C}|\bm{C}=\bm{U}\bm{\Sigma}\bm{V}^T=\sum_{i=1}^{d+1}\sigma_i\bm{u}_i\bm{v}_i^T,~|\bm{u}_i|=|\bm{v}_i|,~\forall\,i,\\
&~\forall\,\bm{C}\in\mathcal{S}^{(d+1)\times(d+1)},~\sum_{i=1}^{d+1}\sigma_{i}\leq\delta,~\bm{U},\,\bm{V}\in\mathbb{R}^{(d+1)\times(d+1)}\}.
\end{aligned}
\end{equation*}
\end{small}
Recall that in Algorithm \ref{FWA}, the points are generated from the decision set $\mathcal{K}:~\bm{C}_t,~\bm{C}_1\in\mathcal{K}$, we have
\begin{small}
\begin{equation*}
\nabla F_t(\bm{C}_t)=\frac{\eta}{2}\sum_{\tau=1}^{t-1}f_{\tau}(\bm{C}_{\tau})\bm{\hat{x}}_t\bm{\hat{x}}_t^T+2(\bm{C}_t-\bm{C}_1)\in\mathcal{S}^{(d+1)\times(d+1)}.
\end{equation*}
\end{small}
Denote $-\nabla F_t(\bm{C}_t)$ as $\bm{H}_t\in\mathcal{S}^{(d+1)\times(d+1)}$ and the SVD of $\bm{H}_t$ as $\bm{H}_t=\sum_{i=1}^{d+1}\theta_i\bm{\mu}_i\bm{\nu}_i^T$, where $\theta_i,~\forall\,i\in\{1,\ldots,d+1\}$ are the singular values of $\bm{H}_t$ and $\bm{\mu}_i,~\bm{\nu}_i$ are the left and right singular vectors respectively, the subroutine in Algorithm \ref{FWA} solves the following linear optimization problem:
\begin{small}
\begin{equation*}
\begin{aligned}
&\min.~&&\langle\bm{C},~\nabla F_t(\bm{C}_t)\rangle\\
&s.t.~&&\bm{C}\in \mathcal{S}^{(d+1)\times (d+1)},~\|\bm{C}\|_{tr}\leq \delta.
\end{aligned}
\end{equation*}
\end{small}
The objective function can be rewritten as $\langle\bm{C},\nabla F_t(\bm{C}_{t})\rangle=\langle\bm{C},-\bm{H}_t\rangle$.
and the linear optimization problem can be rewritten as:
\begin{small}
\begin{equation*}
\mathop{\mathrm{argmin}}\limits_{\bm{C}\in \mathcal{K}}\langle\bm{C},\nabla F_t(\bm{C}_{t})\rangle
=\mathop{\mathrm{argmax}}\limits_{\bm{C}\in\mathcal{K}}\langle\sum_{i=1}^{d+1}\sigma_i\bm{u}_i\bm{v}_i^T,~ \bm{H}_t\rangle.
\end{equation*}
\end{small}
Using the invariance of trace of scalar, we have
\begin{small}
\begin{equation*}
\begin{aligned}
\mathop{\mathrm{argmax}}\limits_{\bm{C}\in\mathcal{K}}\langle\sum_{i=1}^{d+1}\sigma_i\bm{u}_i\bm{v}_i^T,~\bm{H}_t\rangle
=&\mathop{\mathrm{argmax}}\limits_{\bm{C}\in\mathcal{K}}\sum_{i=1}^{d+1}\sigma_i\bm{u}_i^T\bm{H}_t\bm{v}_{i}\\
\leq&\mathop{\mathrm{argmax}}\limits_{\bm{C}\in\mathcal{K}}\sum_{i=1}^{d+1}\sigma_i\xi_i\theta_{\pi(i)},~\forall\,\xi_i\in\{-1,1\}.
\end{aligned}
\end{equation*}
\end{small}
where $\theta_{\pi(i)},\forall\,i$ is a permutation of the singular values of matrix $\bm{H}_t$, and $\pi$ is the arbitrary permutation over $[d+1]$. The last inequality can be attained following Lemma \ref{proof_lemma}.

Based on the non-negativity of singular values and the rearrangement inequality, we have:
\begin{small}
\begin{equation*}
\sum_{i=1}^{d+1}\sigma_i\xi_i\theta_{\pi(i)}\leq~\sum_{i=1}^{d+1}\sigma_i\theta_{\pi(i)}\leq~\sum_{i=1}^{d+1}\sigma_i\theta_i
\leq~\sum_{i=1}^{d+1}\sigma_i\cdot\theta_1=~\|\bm{C}\|_{tr}\theta_1.
\end{equation*}
\end{small}
where $\theta_1$ is the maximal singular value of $\bm{H}_t=\nabla F_t(\bm{C}_t)$.

Recall that $\bm{H}_t=-\nabla F_t(\bm{C}_t)\in\mathcal{S}^{(d+1)\times (d+1)}$, following Lemma \ref{Symmetric_Matrix_SVD_EIG}, we have $\bm{\mu}_i\bm{\nu}_i^T\in\mathcal{S}^{(d+1)\times (d+1)},~\forall\,i$. Therefore the equality can be attained when $\bm{u}_i=\bm{\mu}_i,~\bm{v}_i=\bm{\nu}_i,~\forall\,i\in\{1,\ldots,d+1\}$, and $\sigma_1=\delta,~\sigma_i=0,~\forall\,i=2,\cdots,d+1$.

In this case,  $\hat{\bm{C}}_t=\sum_{i=1}^{d+1}\bm{u}_i\sigma_i\bm{v}_i^T=\delta\bm{\mu}_1\bm{\nu}_1^T$. Following Lemma \ref{Symmetric_Matrix_SVD_EIG}, $\hat{\bm{C}}_t=\delta\bm{\mu}_1\bm{\nu}_1^T\in\mathcal{S}^{(d+1)\times(d+1)}$ and $\|\hat{\bm{C}}_t\|_{tr}=\sum_{i=1}^d\sigma_i=\delta$.

As a result, $\sum_{i=1}^{d+1}\bm{u}_i\sigma_i\bm{v}_i^T\in\mathcal{K}$ and thus we have $\bm{\hat{C}}=\delta\bm{\mu}_1\bm{\nu}_1^T=\mathop{\mathrm{argmin}}\limits_{\bm{C}\in \mathcal{K}}\langle\bm{C},\nabla F_t(\bm{C}_{t})\rangle$, 
which indicates that the linear optimization over the decision set $\mathcal{K}$ amounts to the computation of the maximal singular vectors of the symmetric matrix $\bm{H}_t$.
\end{proof}

\begin{speciallma}
\label{proof_lemma}
Let $\bm{A}\in \mathbb{R}^{n\times n}$ and $g:~\mathbb{R}^n\rightarrow \mathbb{R}$ be a twice-differentiable convex function, and $\sigma_i(\bm{A}),\forall\,i$ be the singular values of matrix $\bm{A}$. For any two orthonormal bases $\{\bm{u}_1,\ldots,\bm{u}_n\}$ and $\{\bm{v}_1,\ldots,\bm{v}_n\}$ of $~\mathbb{R}^n$, there exists a permutation $\pi$ over $[n]$ and $\xi_1,...,\xi_n\in\{-1,~1\}$ such that:
\begin{small}
\begin{equation*}
\begin{aligned}
&g(\bm{u}_1^T\bm{A}\bm{v}_1,\bm{u}_2^T\bm{A}\bm{v}_2,\ldots,\bm{u}_n^T\bm{A}\bm{v}_n)\leq \\ &g(\xi_1\sigma_{\pi(1)}(\bm{A}),\xi_2\sigma_{\pi(2)}(\bm{A}),\ldots,\xi_n\sigma_{\pi(n)}(\bm{A})).
\end{aligned}
\end{equation*}
\end{small}
\end{speciallma}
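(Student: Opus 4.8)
The plan is to reduce the inequality to the special case where $\bm{A}$ is diagonal with nonnegative entries, namely $\bm{A} = \bm{\Sigma} = \mathrm{diag}(\sigma_1(\bm{A}),\dots,\sigma_n(\bm{A}))$, and then exhibit an explicit permutation and sign pattern achieving the bound. First I would write the SVD $\bm{A} = \bm{U}\bm{\Sigma}\bm{V}^T$ and substitute, so that $\bm{u}_i^T \bm{A} \bm{v}_i = (\bm{U}^T\bm{u}_i)^T \bm{\Sigma} (\bm{V}^T\bm{v}_i) = \bm{a}_i^T \bm{\Sigma} \bm{b}_i$, where $\bm{a}_i = \bm{U}^T\bm{u}_i$ and $\bm{b}_i = \bm{V}^T\bm{v}_i$. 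Since $\bm{U},\bm{V}$ are orthogonal and $\{\bm{u}_i\},\{\bm{v}_i\}$ are orthonormal bases, $\{\bm{a}_i\}$ and $\{\bm{b}_i\}$ are again orthonormal bases of $\mathbb{R}^n$; so without loss of generality we may assume $\bm{A} = \bm{\Sigma}$ from the start. Then $\bm{u}_i^T \bm{\Sigma} \bm{v}_i = \sum_{k=1}^n \sigma_k (\bm{u}_i)_k (\bm{v}_i)_k$.

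The key analytic step is to control the vector $\bm{w} = (\bm{u}_1^T\bm{\Sigma}\bm{v}_1,\dots,\bm{u}_n^T\bm{\Sigma}\bm{v}_n)$. I would introduce the doubly (sub)stochastic-type matrix $P$ with entries $P_{ik} = (\bm{u}_i)_k (\bm{v}_i)_k$, so that $w_i = \sum_k P_{ik}\sigma_k$. Because $\{\bm{u}_i\}$ and $\{\bm{v}_i\}$ are orthonormal bases, one checks $\sum_i (\bm{u}_i)_k(\bm{u}_i)_\ell = \delta_{k\ell}$ and likewise for $\bm{v}$, and by Cauchy--Schwarz $\sum_k |P_{ik}| \le (\sum_k (\bm{u}_i)_k^2)^{1/2}(\sum_k (\bm{v}_i)_k^2)^{1/2} = 1$ and similarly $\sum_i |P_{ik}| \le 1$. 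Hence $|\bm{w}|$ (entrywise absolute value) is dominated, in the sense of the matrix majorization / Birkhoff polytope, by a convex combination of signed permutations of $(\sigma_1,\dots,\sigma_n)$; concretely, $\bm{w}$ lies in the convex hull of the points $\{(\xi_1\sigma_{\pi(1)},\dots,\xi_n\sigma_{\pi(n)}) : \pi \in S_n,\ \xi \in \{-1,1\}^n\}$. This is the standard fact underlying von Neumann's trace inequality, and I would either cite it or prove it via Birkhoff--von Neumann applied to the matrix $(|P_{ik}|)$ padded to doubly stochastic.

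Once $\bm{w}$ is in that convex hull, convexity of $g$ finishes the argument: $g(\bm{w}) \le \max_{\pi,\xi} g(\xi_1\sigma_{\pi(1)}(\bm{A}),\dots,\xi_n\sigma_{\pi(n)}(\bm{A}))$, since a convex function on a polytope attains its maximum at a vertex, and the maximum over all $(\pi,\xi)$ is attained at some particular choice, giving the claimed inequality. The twice-differentiability hypothesis is not really needed beyond convexity; I would just use convexity and Jensen.

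I expect the main obstacle to be establishing the convex-hull containment rigorously, i.e. showing that the vector $\bm{w}$ built from two arbitrary orthonormal bases is majorized by signed permutations of the singular values. The clean route is: form the $n\times n$ matrix $Q$ with $Q_{ik} = |(\bm{u}_i)_k (\bm{v}_i)_k|$, verify its row and column sums are $\le 1$, extend it to a doubly stochastic matrix $\tilde Q$, apply Birkhoff--von Neumann to write $\tilde Q = \sum_j \lambda_j \Pi_j$ as a convex combination of permutation matrices, and then absorb the signs of $(\bm{u}_i)_k(\bm{v}_i)_k$ into the $\xi$'s; care is needed because the sign pattern depends on $(i,k)$ rather than factoring through the permutation, so one should first bound $w_i \le \sum_k Q_{ik}\sigma_k$ using $\sigma_k \ge 0$ and only afterwards worry about the lower side via the $\xi_i$. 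Handling both signs of each $w_i$ simultaneously within a single convex-combination representation is the delicate bookkeeping, but it goes through because $g$ is evaluated at a single vertex in the end and we are free to choose the worst sign vector.
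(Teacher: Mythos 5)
Your proposal is essentially correct, but note that the paper never proves this lemma itself---it only defers to the cited reference and omits all details---so your argument is necessarily a different, self-contained route, and it is a sound one. The reduction via the SVD to the case $\bm{A}=\bm{\Sigma}$, the observation that $P_{ik}=(\bm{u}_i)_k(\bm{v}_i)_k$ has row and column $\ell_1$-sums at most $1$ by Cauchy--Schwarz, the conclusion that $\bm{w}=P\bm{\sigma}$ lies in the convex hull of the signed permutations of the singular-value vector, and the final Jensen step (so the maximum of the convex $g$ over that polytope sits at one of the points $(\xi_1\sigma_{\pi(1)},\ldots,\xi_n\sigma_{\pi(n)})$) are all valid, and you are right that twice-differentiability is not needed beyond convexity. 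The only place that needs tightening is the one you flag: ``pad $|P|$ to doubly stochastic and absorb the signs afterwards'' is loose, because the signs sit on individual entries $(i,k)$ rather than factoring through a permutation. Two clean ways to close it: (i) invoke (or prove, by the same argument as Birkhoff--von Neumann) the hyperoctahedral version of Birkhoff's theorem, namely that the polytope $\{M:\ \sum_k|M_{ik}|\le 1\ \forall i,\ \sum_i|M_{ik}|\le 1\ \forall k\}$ has exactly the signed permutation matrices as its extreme points, so $P$ itself is a convex combination of signed permutation matrices and $\bm{w}=P\bm{\sigma}$ lands directly in the desired hull; or (ii) keep your two-step route but finish it by noting $|\bm{w}|\le |P|\bm{\sigma}$ entrywise with $|P|$ doubly substochastic, and then use the standard characterization that $\mathrm{conv}\{(\xi_i\sigma_{\pi(i)})_i\}$ consists precisely of the vectors whose entrywise absolute value is weakly submajorized by $(\sigma_1,\ldots,\sigma_n)$. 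With either patch the proof is complete, and it has the merit of making explicit a fact the paper leaves entirely to an external citation.
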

The proof can be found in \cite{DBLP:journals/corr/abs-1708-02105} and the details are omitted here.


Now we are ready to prove the Theorem \ref{Decision_Set}.
\begin{proof}[\textbf{Proof of Theorem \ref{Decision_Set}}]
The proof is conducted with the mathematical induction method.

When $t=1$, recall the initialization of Algorithm \ref{FWA}, $\bm{C}_1\in\mathcal{K}$, thus the proposition stands when $t=1$.

Now assuming the proposition stands for $t>1$: $\bm{C}_t\in\mathcal{K},~\forall\,t\in \{2,\ldots,T\}$, we prove that $\bm{C}_{T+1}\in\mathcal{K}$.
According to the assumption, the augmented matrix is inside the decision set: $\bm{C}_T\in\mathcal{K}$, thus $\|\bm{C}_T\|_{tr}\leq\delta$ and $\bm{C}_T\in\mathcal{S}^{(d+1)\times(d+1)}$. Following the formulation of CCFM and Algorithm \ref{FWA}, we have
\begin{small}
\begin{equation*}
\nabla F_T(\bm{C}_T)=\frac{\eta}{2}\sum_{t=1}^{T}f_t(\bm{C}_t)\bm{\hat{x}}_t\bm{\hat{x}}_t^T+2(\bm{C}_T-\bm{C}_1)\in\mathcal{S}^{(d+1)\times(d+1)},
\end{equation*}
\end{small}
Based on Lemma \ref{Symmetric_Matrix_SVD_EIG} and Theorem \ref{Linear_OPT}, we have
\begin{small}
\begin{equation*}
\bm{\hat{C}}_T=\mathop{\mathrm{argmin}}\limits_{\bm{C}\in \mathcal{K}}\langle\bm{C},\nabla F_T(\bm{C}_{T})\rangle=\delta\bm{u}_1\bm{v}_1^T\in\mathcal{S}^{(d+1)\times(d+1)},
\end{equation*}
\end{small}
where $\bm{u}_1$ and $\bm{v}_1$ are the maximal singular vectors of $-\nabla F_T(\bm{C}_T)$.
Moreover $\|\bm{\hat{C}}_T\|_{tr}=\|\delta\bm{u}_1\bm{v}^T_1\|_{tr}=\delta$. Therefore $\bm{\hat{C}}_T\in\mathcal{K}$.
By the convexity of decision set $\mathcal{K}$, we obtain
\begin{small}
\begin{equation*}
\bm{C}_{T+1}=\gamma\bm{C}_T+(1-\gamma)\bm{\hat{C}_T}\in\mathcal{K}.
\end{equation*}
\end{small}
Therefore the induction stands when $t=T+1$.

In summary, the induction stands for both $t=1$ and $t\in{2,\ldots,T}$, $\forall\,T>1$, which indicates $\bm{C}_t\in\mathcal{K},~\forall\,t\in\mathbb{Z}^{+}$.

\end{proof}


With Theorem \ref{Linear_OPT} and Theorem \ref{Decision_Set}, we prove that the aforementioned two requirements are satisfied. Thus the subroutine in Online Compact Convexified Factorization Machine is a valid conditional gradient step in OCG algorithm, making OCCFM a valid OCG variant.
Following the theoretical analysis of OCG in the OCO framework, we prove that the regret of Algorithm \ref{FWA} after $T$ rounds is sub-linear in $T$, as shown in Theorem \ref{Bound}.

\begin{specialthm}
\label{Bound}
The Online Compact Convexified Factorization Machine with parameters $\eta=\frac{D}{4GT^{3/4}}$, $\gamma_t=\frac{1}{t^{1/2}}$, attains the following guarantee$ ^4$\footnote{$^4$We have reivised the minor mistakes made in the original proof and thus give a slightly different bound here.}:
\begin{small}
\begin{equation*}
regret_T=\sum_{t=1}^Tf_t(\bm{C}_t)-\min_{\bm{C}^{*}\in\mathcal{K}}\sum_{t=1}^Tf_t(\bm{C}^{*})\leq 12DGT^{3/4}+DGT^{1/4},
\end{equation*}
\end{small}
where $D$, $G$ represent an upper bound on the diameter of $\mathcal{K}$ and an upper bound on the norm of the sub-gradients of $f_t(\bm{C})$ over $\mathcal{K}$, i.e. $\|\nabla f(\bm{C})\|\leq G,~\forall\,\bm{C}\in \mathcal{K}$.
\end{specialthm}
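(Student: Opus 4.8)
The plan is to recognize Algorithm~\ref{FWA} as the Online Conditional Gradient algorithm (Algorithm~\ref{OCG}) run on the decision set $\mathcal{K}$, and then to carry out the OCG regret analysis over $\mathcal{K}$ while tracking all constants. By Lemma~\ref{Nuclear_Norm_Ball} and Lemma~\ref{Lemma::Symmetric_NNB} the set $\mathcal{K}$ is convex and compact; by Lemma~\ref{Lemma::Nested_Function} each loss $f_t$ is convex on $\mathcal{K}$; and by Theorem~\ref{Linear_OPT} together with Theorem~\ref{Decision_Set}, lines~5--6 of Algorithm~\ref{FWA} return exactly a minimizer of $\langle\bm{C},\nabla F_t(\bm{C}_t)\rangle$ over $\mathcal{K}$ and keep every iterate in $\mathcal{K}$. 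Hence OCCFM is a faithful instantiation of OCG with $\mathcal{Q}=\mathcal{K}$, so it suffices to prove the OCG regret bound on $\mathcal{K}$. By convexity of $f_t$ I would first pass to the linearized regret, $\sum_{t}f_t(\bm{C}_t)-\sum_{t}f_t(\bm{C}^{*})\le\sum_{t}\langle\nabla f_t(\bm{C}_t),\,\bm{C}_t-\bm{C}^{*}\rangle$, so that everything below concerns the linear losses $\langle\nabla f_t(\bm{C}_t),\cdot\rangle$.

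I would then introduce the regularized leaders $\bm{C}_t^{*}=\mathrm{argmin}_{\bm{C}\in\mathcal{K}}F_t(\bm{C})$ and split the linearized regret as $\sum_{t}\langle\nabla f_t(\bm{C}_t),\,\bm{C}_t-\bm{C}_t^{*}\rangle+\sum_{t}\langle\nabla f_t(\bm{C}_t),\,\bm{C}_t^{*}-\bm{C}^{*}\rangle$. Since $F_t$ is a linear form plus the quadratic $\|\bm{C}-\bm{C}_1\|_2^2$, it is $2$-strongly convex and $2$-smooth, which I will use repeatedly. For the second (``be-the-leader'') sum, the standard follow-the-leader/be-the-leader inequality bounds it by $\tfrac1\eta\big(\|\bm{C}^{*}-\bm{C}_1\|_2^2-\|\bm{C}_1^{*}-\bm{C}_1\|_2^2\big)+\sum_t\langle\nabla f_t(\bm{C}_t),\,\bm{C}_t^{*}-\bm{C}_{t+1}^{*}\rangle\le \tfrac{D^2}{\eta}+\sum_t\langle\nabla f_t(\bm{C}_t),\,\bm{C}_t^{*}-\bm{C}_{t+1}^{*}\rangle$, and the stability estimate $\|\bm{C}_{t+1}^{*}-\bm{C}_t^{*}\|\le \eta G/2$ --- immediate from $2$-strong convexity of $F_t$, the identity $F_{t+1}=F_t+\eta\langle\nabla f_t(\bm{C}_t),\cdot\rangle$, and $\|\nabla f_t(\bm{C}_t)\|\le G$ --- controls each summand by $\eta G^2/2$. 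Thus the leader sum contributes at most $\tfrac{D^2}{\eta}+\tfrac{\eta G^2 T}{2}$.

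The crux is the first (``approximation'') sum. Setting $h_t=F_t(\bm{C}_t)-\min_{\bm{C}\in\mathcal{K}}F_t(\bm{C})\ge0$, the $2$-strong convexity of $F_t$ gives $\|\bm{C}_t-\bm{C}_t^{*}\|\le\sqrt{h_t}$, so by Cauchy--Schwarz the approximation sum is at most $G\sum_t\sqrt{h_t}$. The target is $h_t=O(D^2/\sqrt{t})$ with a \emph{$T$-independent} constant, which then gives $\sum_t\sqrt{h_t}=O(DT^{3/4})$ because $\sum_{t\le T}t^{-1/4}\le\tfrac43 T^{3/4}$. I would establish this by induction on $t$, the base case being the identity $h_1=0$. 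The conditional-gradient step, combined with $2$-smoothness of $F_t$ and the fact that $\bm{\hat{C}}_t$ minimizes $\langle\,\cdot\,,\nabla F_t(\bm{C}_t)\rangle$ over $\mathcal{K}$, yields $F_t(\bm{C}_{t+1})-\min_{\mathcal{K}}F_t\le(1-\gamma_t)h_t+\gamma_t^2 D^2$. The delicate point is to pass from $F_t$ to $F_{t+1}$ without incurring a non-decaying additive term of order $\eta GD$: using that $F_t$ is $2$-strongly convex with minimizer $\bm{C}_t^{*}$, the perturbed minimum obeys $\min_{\mathcal{K}}F_{t+1}\ge\min_{\mathcal{K}}F_t+\eta\langle\nabla f_t(\bm{C}_t),\bm{C}_t^{*}\rangle-\tfrac{\eta^2}{4}\|\nabla f_t(\bm{C}_t)\|^2$, and combining this with $F_{t+1}(\bm{C}_{t+1})=F_t(\bm{C}_{t+1})+\eta\langle\nabla f_t(\bm{C}_t),\bm{C}_{t+1}\rangle$ and $\|\bm{C}_{t+1}-\bm{C}_t^{*}\|\le\gamma_t D+\sqrt{h_t}$ gives the self-referential recursion
\begin{equation*}
h_{t+1}\ \le\ (1-\gamma_t)h_t+\gamma_t^2 D^2+\eta G\gamma_t D+\eta G\sqrt{h_t}+\tfrac{\eta^2 G^2}{4}.
\end{equation*}
With $\gamma_t=t^{-1/2}$ and $\eta=\Theta\big(D/(GT^{3/4})\big)$, under the hypothesis $h_t\le CD^2/\sqrt{t}$ every term on the right other than $(1-\gamma_t)h_t$ is $O(D^2/t)$ --- in particular $\eta G\sqrt{h_t}=O\big(\eta G\sqrt{C}\,D\,t^{-1/4}\big)=O(D^2/t)$ precisely because $\eta GT^{3/4}=O(D)$ and $t\le T$ --- so the induction closes for an absolute constant $C$.

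Finally I would assemble the pieces: the leader sum is at most $\tfrac{D^2}{\eta}+\tfrac{\eta G^2 T}{2}$ and the approximation sum at most $G\sum_t\sqrt{h_t}=O(DGT^{3/4})$; substituting $\eta=\tfrac{D}{4GT^{3/4}}$ turns the leader sum into $4DGT^{3/4}+\tfrac18 DGT^{1/4}$, and collecting all terms with their explicit numerical constants (from the induction and from $\sum_{t\le T}t^{-1/4}\le\tfrac43T^{3/4}$) yields $\mathbf{regret}_T\le 12DGT^{3/4}+DGT^{1/4}$. I expect the induction step of the previous paragraph to be the main obstacle: replacing the crude drift bound $\eta GD$ by the self-referential $\eta G\sqrt{h_t}$ is exactly what makes the exponent $3/4$ rather than $7/8$, and it is presumably where the ``minor mistakes'' of the original proof (noted in the footnote) occurred; everything else is routine bookkeeping of constants.
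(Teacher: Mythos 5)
Your proposal is correct and takes essentially the same route as the paper, which omits the details and defers to the standard Online Conditional Gradient analysis of Hazan: you reconstruct exactly that argument on $\mathcal{K}$ (linearize, split the regret into the FTRL-leader regret $\tfrac{D^2}{\eta}+\tfrac{\eta G^2T}{2}$ plus the approximation term $G\sum_t\sqrt{h_t}$, and close the induction $h_t=O(D^2/\sqrt{t})$), invoking Theorems \ref{Linear_OPT} and \ref{Decision_Set} to justify that OCCFM is a faithful OCG instantiation. Your refined drift bound $\eta G(\gamma_t D+\sqrt{h_t})$ in place of the crude $\eta GD$ is indeed what preserves the $T^{3/4}$ rate (the crude bound only yields $T^{7/8}$), and with $\eta=\tfrac{D}{4GT^{3/4}}$, $\gamma_t=t^{-1/2}$ your constants comfortably give the stated $12DGT^{3/4}+DGT^{1/4}$.
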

The proof of this theorem largely follows from that in~\cite{hazan2016introduction} with some variations. Due to the page limit, we omit it here. 
\section{Experiments}
\label{Experiments}
In this section, we evaluate the performance of the proposed online compact convexified factorization machine on two popular machine learning tasks: online rating prediction for recommendation and online binary classification.

\subsection{Experimental Setup}


\textbf{Compared Algorithms}\quad
We compare the empirical performance of OCCFM with state-of-the-art variants of FM in the online learning setting.
As the previous studies on FM focus on batch learning settings, we construct the baselines by applying online learning approaches to the existing formulations of FM, which is a common experimental methodology in online learning research \cite{Zhao:2011:OAM:3104482.3104512,li2014online}.
In the experiments, the comparison between OCCFM and other algorithms is focused on the aspects of formulation and online learning algorithms respectively. In existing research, two formulations of FM are related to OCCFM, i.e. the non-convex formulation of vanilla FM \cite{rendle2010factorization} and the non-compact formulation of Convex FM \cite{Yamada:2017:CFM:3097983.3098103}. Meanwhile the most popular online learning algorithms are Online Gradient Descent (OGD) \citep{Zinkevich:2003:OCP:3041838.3041955}, Passive-Aggressive (PA) \cite{Crammer:2006:OPA:1248547.1248566} and Follow-The-Regularized-Leader (FTRL) \cite{ad41159} which achieves the best empirical performance among them in most cases. To illustrate the comparison between different formulations, we apply the state-of-art FTRL algorithm on vanilla FM, CFM and the proposed CCFM, which are denoted as FM-FTRL, CFM-FTRL and CCFM-FTRL respectively. To illustrate the comparison between different online learning algorithms, we apply OGD, PA and FTRL on the proposed CCFM, which are denoted as CCFM-OGD, CCFM-PA, CCFM-FTRL respectively. 
To summarize, the compared algorithms are:
\begin{itemize}[leftmargin=*]
  \item FM-FTRL: vanilla FM with FTRL algorithm (a similar approach is proposed in \cite{ta2015factorization} for batch learning);
  \item CFM-FTRL: Convex FM \cite{Yamada:2017:CFM:3097983.3098103} with FTRL algorithm;
  \item CCFM-OGD: Compact Convexified FM with OGD algorithm;
  \item CCFM-PA: Compact Convexified FM with PA algorithm;
  \item CCFM-FTRL: Compact Convexified FM with FTRL algorithm;
  \item OCCFM: Online Compact Convexified FM algorithm.



\end{itemize}

\noindent\textbf{Datasets}\quad We select different datasets for the tasks respectively. For the Online Recommendation tasks, we use the typical Movielens datasets, including Movielens-100K, Movielens-1M and Movielens-10M $ ^5$\footnote{$ ^5$ https://grouplens.org/datasets/movielens/}; for the Online Binary Classification tasks, we select datasets from LibSVM $ ^6$, including IJCNN1, Spam and Epsilon \footnote{$ ^6$ https://www.csie.ntu.edu.tw/~cjlin/libsvmtools/datasets/binary.html}. The statistics of the datasets are summarized in Table \ref{dataset}.

\begin{small}
\begin{table}[!htbp]
\centering
\caption{Statistical Details of the Datasets}\label{dataset}
\scalebox{1.0}
{\begin{tabular}{|c|c|c|c|} \hline
\textbf{Datasets}&\textbf{\#Features}&\textbf{\#Instances}&\textbf{Label}\\ \hline\hline
\textbf{Movielens-100K} & 2,625 & 100,000 & Numerical\\
\textbf{Movielens-1M} & 9,940 & 1,000,209 & Numerical\\
\textbf{Movielens-10M} & 82,248 & 10,000,000 & Numerical\\ \hline\hline
%
\textbf{IJCNN1} & 22 & 141,691 & Binary\\
\textbf{Spam} & 252 & 350,000 & Binary\\
\textbf{Epsilon} & 2,000 & 100,000 & Binary\\ \hline

\hline\end{tabular}}
\end{table}
\end{small}

%

For each dataset, we conduct the experiments with five-fold cross-validation. In our experiments, the training instances are randomly permutated and fed one by one to the model sequentially. Upon the arrival of each instance, the model makes the prediction and
is updated after the label is revealed.
The experiment is conducted with 20 runs of different random permutations for the training data. The results are reported with the averaging performance over these runs.

\noindent\textbf{Evaluation Metrics}\quad To evaluate the performances on both tasks properly, we select different metrics respectively: the Root Mean Square Error (RMSE) for the rating prediction tasks; and the Error Rate and AUC (Area Under Curve) for the binary classification tasks, since AUC is known to be immune to the class imbalance problems.


%


\subsection{Online Recommendation}
In the online Recommendation task, at each round, the model receives a pair of user ID and item ID sequentially and then predicts the value of the incoming rating correspondingly. Denote the instance arriving at round $t$ as $(u_t,\,i_t,\,y_t)$, where $u_t$, $i_t$ and $y_t$ represent the user ID, item ID and the rating given by user $u_t$ to item $i_t$, the input feature vector $\bm{x}_t^T$ is constructed like this:
\begin{figure}[!htbp]
\vspace{-10pt}
\centering
\includegraphics[height=0.55in, width=2.4in]{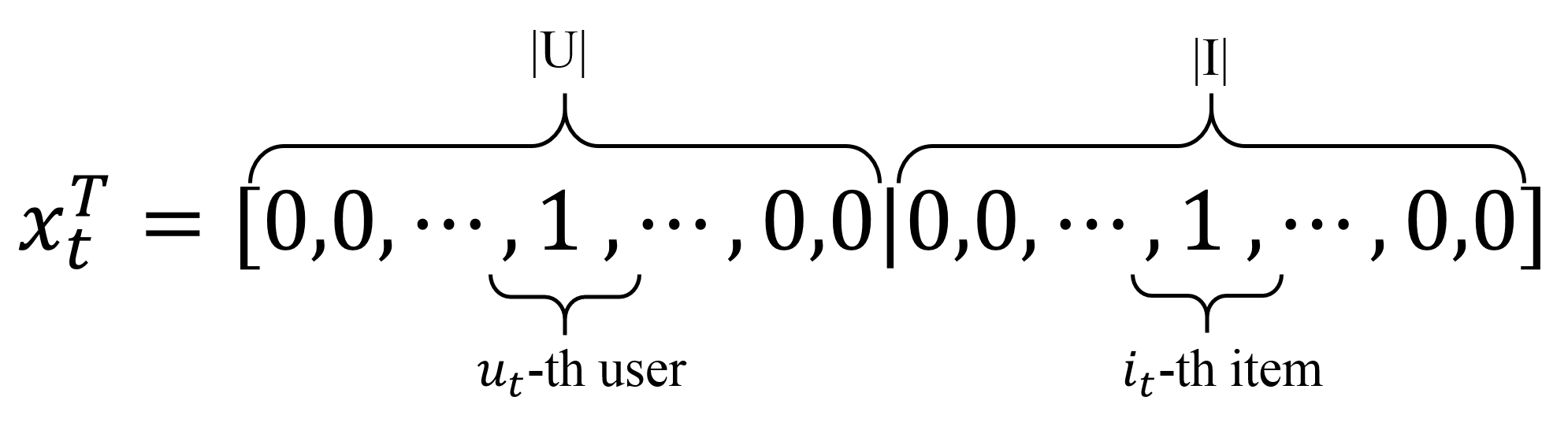}
\vspace{-10pt}
\end{figure}
\begin{figure}[!t]
\centering
\includegraphics[height=1.3in, width=3.4in]{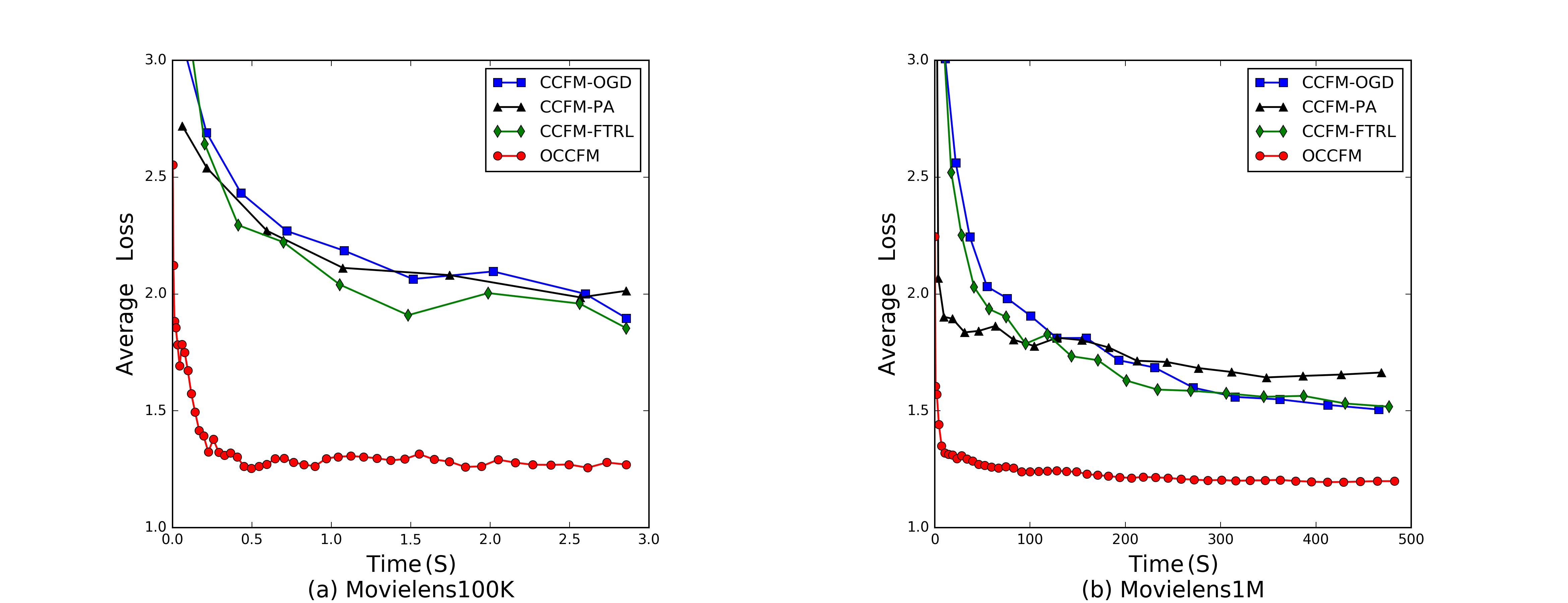}
\caption{Comparison of the efficiency of CCFM-OGD, CCFM-PA, CCFM-FTRL and OCCFM on Movielens 100K and Movielens 1M}
\label{fig:ave_loss_compare}
\end{figure}

\noindent where $|U|$ and $|I|$ refer to the number of users and the number of items respectively.
%
Upon the arrival of each instance, the model predicts the rating with $\hat{y}_t=\frac{1}{2}\hat{\bm{x}}_t^T\bm{C}_t\hat{\bm{x}}_t$, where $\hat{\bm{x}}_t=[\bm{x}_t^T,1]^T$.
The rating prediction task is essentially a regression problem, thus the convex loss function incurred at each round is the squared loss:
\begin{small}
\begin{equation*}
f_t(\bm{C}_t)=\|\hat{y}_t(\bm{C}_t)-y_t\|_2^2.
\end{equation*}
\end{small}

\noindent The nuclear norm bounds in the CCFM and CFM are set to $10$, $10$ and $20$ for Movielens-100K, 1M and 10M respectively. 
For FM, the rank parameters are set to $10$ on all the datasets. These hyper-parameters are selected with a grid search.
For OGD, PA and FTRL algorithms, the learning rate at round $t$ is set to $\frac{1}{\sqrt{t}}$ as what their corresponding theories suggest \cite{Crammer:2006:OPA:1248547.1248566,Zinkevich:2003:OCP:3041838.3041955,Shalev-Shwartz:2012:OLO:2185819.2185820}. In experiments on ML-1M and ML-10M, we randomly sample 1000 users and 1000 items for simplicity.


\begin{small}
\begin{table}[!htbp]
\renewcommand{\arraystretch}{1.2}
\centering
\caption{RMSE on Movielens-100K, Movielens-1M and Movielens-10M datasets in Online Rating Prediction of Recommendation tasks \protect\footnotemark[2]$ ^7$}
\label{tableRMSE}
\scalebox{0.9}{
\begin{tabular}{|c|c|c|c|}
  \hline
  \tabincell{c}{\multirow{2}{*}{\textbf{Algorithms}}} & \multicolumn{3}{c|}{\textbf{RMSE on Datasets}}\\
  \cline{2-4}
   & \textbf{Movielens100K} & \textbf{Movielens1M} & \textbf{Movielens10M} \\
  \hline\hline
  \textbf{FM-FTRL} & 1.2781 & 1.1074 & 1.0237 \\
  \textbf{CFM-FTRL} & 1.1036 & 1.0552 & 1.0078 \\ \hline\hline
  \textbf{CCFM-OGD} & 1.2721 & 1.0645 & 1.0291 \\
  \textbf{CCFM-PA} & 1.2164 & 1.0570 & 1.0142  \\
  \textbf{CCFM-FTRL} & $1.0873^{\dagger}$ & $1.0441^{\dagger}$ & $0.9725^{\dagger}$ \\ \hline\hline
  \textbf{OCCFM} & \textbf{1.0359*} & \textbf{0.9702*} & \textbf{0.9441*} \\
  \hline
\end{tabular}
}
\end{table}
\end{small}
\footnotetext[2]{$ ^7$ The results with $\dagger$ mark have passed the significance test with $p<0.01$ compared with FM-FTRL and CFM-FTRL; the results with $*$ mark have passed the significance test with $p<0.01$ compared with the other algorithms}

We list the RMSE of OCCFM and other compared algorithms in Table. \ref{tableRMSE}. From our observation, OCCFM achieves higher prediction accuracy than the other online learning baselines. Since FM-FTRL, CFM-FTRL, CCFM-FTRL use the same online learning algorithm with different formulations of FM, the comparison between them illustrates the advantage of Compact Convexified FM. Meanwhile, CCFM-OGD, CCFM-PA, CCFM-FTRL and OCCFM adopt the same formulation of Compact Convexified FM, the comparison between them shows the effectiveness of the conditional gradient step in OCCFM algorithm. 

We also measure the efficiency of CCFM-OGD, CCFM-PA, CCFM-FTRL and OCCFM algorithms on different datasets and see how fast the average losses decrease with the running time, which is shown in Fig. \ref{fig:ave_loss_compare}. From the results we can clearly observe that OCCFM runs significantly faster than CCFM-OGD, CCFM-PA and CCFM-FTRL, which illustrates the necessity and efficiency of using the linear optimization instead of the projection step.

\subsection{Online Binary Classification}
In the online binary classification tasks, the instances are denoted as $(\bm{x}_t,~y_t)\,\forall\,t$, where $\bm{x}_t$ is the input feature vector and $y_t\in\{-1,~+1\}$ is the class label. At round $t$, the model predicts the label with $sign(\hat{y}_t)=sign(\frac{1}{2}\hat{\bm{x}}_t^T\bm{C}_t\hat{\bm{x}}_t)$, where $\hat{\bm{x}}_t=[\bm{x}_t^T,1]^T$.
The loss function is a logistic loss function with respect to $\bm{C}_t$:

\begin{small}
\begin{equation*}
f_t(\bm{C_t})=log(1+\frac{1}{exp(-{y}_t\cdot\hat{y}_t(\bm{C}_t))}).
\end{equation*}
\end{small}

\noindent The nuclear norm bounds for CCFM and CFM in this task are set to $300$, $1000$, and $200$ for IJCNN1, Spam and Epsilon respectively; while the ranks for FM are set to $20$, $50$ and $50$ accordingly. All the hyper-parameters are selected with a grid search and set to the values with the best performances.
For OGD, PA and all FTRL algorithms, the learning rate at round $t$ is set to $\frac{1}{\sqrt{t}}$ as the theories suggest \cite{Shalev-Shwartz:2012:OLO:2185819.2185820}.
\begin{small}
\begin{table}[!htbp]
\renewcommand{\arraystretch}{1.0}
\centering
\caption{Error Rate and AUC on IJCNN1, Spam and Epsilon datasets in Online Binary Classification Tasks}
\label{tableER}
\scalebox{0.87}{
\begin{tabular}{|c|c|c|c|c|c|c|}
  \hline
  \tabincell{c}{\multirow{2}{*}{\textbf{Algorithms}}} & \multicolumn{3}{c|}{\textbf{Error Rate on Datasets}}& \multicolumn{3}{c|}{\textbf{AUC on Datasets}}\\
  \cline{2-7}
   & \textbf{IJCNN} & \textbf{Spam} & \textbf{Epsilon} & \textbf{IJCNN} & \textbf{Spam} & \textbf{Epsilon} \\ \hline\hline
  \textbf{FM-FTRL} & 0.0663 & 0.0867 & 0.1773 & 0.9647 & 0.9070 & 0.8213\\ 
  \textbf{CFM-FTRL} & 0.0544 & 0.0598 & 0.1654 & 0.9736 & 0.9390 & 0.8319\\ \hline\hline
  \textbf{CCFM-OGD} & 0.0911 & 0.1076 & 0.1742 & 0.9192 & 0.9239 & 0.8277 \\ 
  \textbf{CCFM-PA} & 0.0711 & 0.0976 & 0.1412 & 0.9312 & 0.9201 & 0.8575\\ 
  \textbf{CCFM-FTRL} & $0.0520^{\dagger}$ & $0.0588^{\dagger}$ & $0.1380^{\dagger}$ & 0.9757 & 0.9398 & $0.8668^{\dagger}$\\ \hline\hline
  \textbf{OCCFM} & \textbf{0.0243*} & \textbf{0.0567*} & \textbf{0.1202*} & \textbf{0.9859*} & \textbf{0.9414*} & \textbf{0.8737*}\\ \hline
\end{tabular}
}
\end{table}
\end{small}

The comparison of the prediction accuracy between OCCFM and other baseline algorithms is presented in Table \ref{tableER}. As shown in the table, OCCFM achieves the lowest error rates and the highest AUC values among all the online learning approaches, which reveals the advantage of OCCFM in prediction accuracy.

We also compare the running time of OCCFM and other baseline algorithms and present the results in Fig \ref{fig:ave_loss_compare_bc}. Similar with the observation in recommendation tasks, our proposed OCCFM runs significantly faster than CCFM-OGD and CCFM-FTRL.

\section{Related Work}
\label{Related_Work}

\begin{figure}[!t]
\centering
\includegraphics[height=1.3in, width=3.4in]{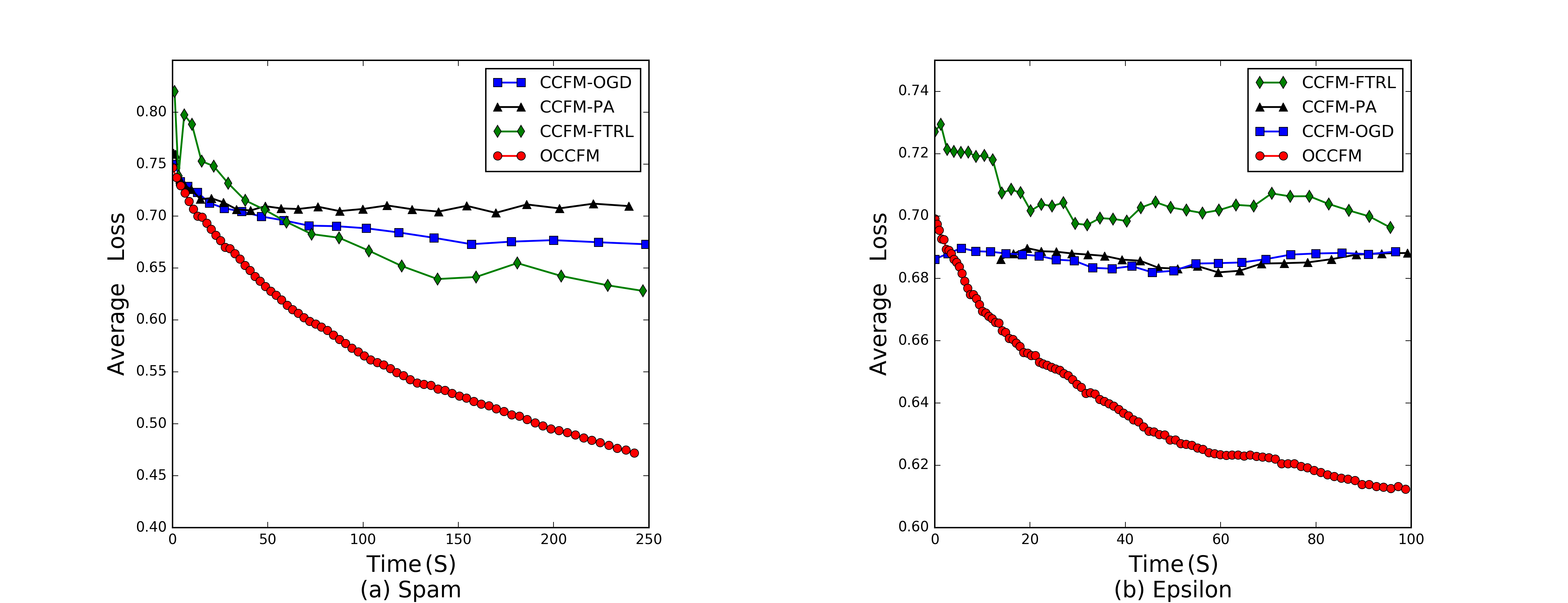}
\caption{Comparison of the efficiency of CCFM-OGD, CCFM-PA, CCFM-FTRL and OCCFM on Spam and Epsilon}
\label{fig:ave_loss_compare_bc}
\end{figure}

\subsection{Factorization Machine}
The core of FM is to leverage feature interactions for feature augmentation. According to the order of feature interactions, the existing research can be categorized into two lines: the first category \cite{juan2017field, cheng2014gradient, Yuan:2017:BBF:3025171.3025211}  focuses on second-order FM which models pair-wise feature interactions for feature engineering; while the second category \cite{blondel2016polynomial, blondel2016higher} attempts to model the interactions between arbitrary number of features. One line of related research in the first category is Convex Factorization Machine \cite{blondel2015convex, yamada2015convex, Yamada:2017:CFM:3097983.3098103}, which looks for convex variants of FM. However, the formulations of Convex FM are not well organized into a compact form which we provide for Compact Convexified FM to meet the requirements of online learning setting.

The most significant difference between OCCFM and previous research is that OCCFM is an online machine learning model while most existing variants are batch learning models. Some studies attempt to apply FM in online applications \cite{kitazawa2016incremental} \cite{ta2015factorization}. But these studies do not fit in the online learning framework with theoretical guarantees while OCCFM provides a theoretically provable regret bound.

\subsection{Online Learning}
Online learning stands for a family of efficient and scalable machine learning algorithms \cite{Cesa2004On, Crammer:2006:OPA:1248547.1248566, Rosenblatt1958The, Hoi2014LIBOL, Wang:2016:SCL:2973184.2932193}. Unlike conventional batch learning algorithms, the online learning algorithms are built upon the assumption that the training instances arrive sequentially rather than being available prior to the learning task.

Many algorithms have been proposed for online learning, including the classical Perception algorithm and Passive-Aggressive (PA) algorithm \cite{Crammer:2006:OPA:1248547.1248566}. In recent years, the design of many efficient online learning algorithms has been influenced by convex optimization tools \cite{Dredze:2008:CLC:1390156.1390190, Wang:2016:SCL:2973184.2932193}. Some typical algorithms include Online Gradient Descent (OGD) and Follow-The-Regularized-Leader (FTRL) \cite{abernethy2008competing} \cite{shalev2012online}. However, their further applicability is limited by the expensive projection operation required when additional norm constraints are added.
Recently, the Online Conditional Gradient (OCG) algorithm \cite{pmlr-v28-jaggi13} has regained a surge of research interest. It eschews the computational expensive projection operation thus is highly efficient in handling large-scale learning problems. Our proposed OCCFM model builds upon the OCG algorithm. For further details, please refer to \cite{pmlr-v28-jaggi13} and \cite{hazan2016introduction}.

\section{Conclusion}
\label{Conclusion}
In this paper, we propose an online variant of FM which works in online learning settings. The proposed OCCFM meets the requirements in OCO and is also more cost effective and accurate compared to extant state-of-the-art approaches. In the study, we first invent a convexification Compact Convexified FM (CCFM) based on OCO such that it fulfills the two fundamental requirements within the OCO framework. Then, we follow the general projection-free algorithmic framework of Online Conditional Gradient and propose an Online Compact Convex Factorization Machine (OCCFM) algorithm that eschews the projection operation with linear optimization step. In terms of theoretical support, we prove that the algorithm preserves a sub-linear regret, which indicates that our algorithm can perform as well as the best fixed algorithm in hindsight. Regarding the empirical performance of OCCFM, we conduct extensive experiments on real-world datasets. The experimental results on recommendation and binary classification tasks indicate that OCCFM outperforms the state-of-art online learning algorithms.

\bibliographystyle{ACM-Reference-Format}
\bibliography{OCFM}
\end{document}